\newtheorem{theorem}{Theorem}
\newtheorem{remark}{Remark}
\renewcommand{\vec}[1]{\ensuremath{\mathbf{#1}}}
\newcommand{\stitle}[1]{\vspace{1mm} \noindent {\bf #1}}
\newcommand{\ie}{{\it i.e.}}
\newcommand{\bN}{\ensuremath{\mathcal{N}}}
\newcommand{\bM}{\ensuremath{\mathcal{M}}}
\newcommand{\model}{THGNN}
\begin{document}
\markboth{IEEE Transactions on Neural Networks and Learning Systems} 
{\MakeLowercase{\textit{}}:}

% \title{A Sample Article Using IEEEtran.cls\\ for IEEE Journals and Transactions}
% \title{Static and Conditional Prompt Tuning with Graph-Augmented Low-Resource Text Classification}
\title{Temporal and Heterogeneous Graph Neural Network for Remaining Useful Life Prediction }

\author{
    \IEEEauthorblockN{Zhihao Wen\IEEEauthorrefmark{3}\IEEEauthorrefmark{1},
    Pengcheng Wei\IEEEauthorrefmark{4}\IEEEauthorrefmark{1},
    Yuan Fang\IEEEauthorrefmark{3}\IEEEauthorrefmark{2}~\IEEEmembership{Member,~IEEE,}
    Fayao Liu\IEEEauthorrefmark{5},
    Zhenghua Chen\IEEEauthorrefmark{5}~\IEEEmembership{Senior Member,~IEEE}
    and Min Wu\IEEEauthorrefmark{5}~\IEEEmembership{Senior Member,~IEEE}
    }\\
    
    \thanks{\IEEEauthorblockA{\IEEEauthorrefmark{1}School of Computing and Information Systems,
    Singapore Management University,
    Singapore, 188065,
    Email: zhwen@smu.edu.sg, yfang@smu.edu.dg}}
    
    \thanks{\IEEEauthorblockA{\IEEEauthorrefmark{4}Information Systems Technology and Design Pillar, Singapore University of Technology and Design, Singapore, 
    Email: pengcheng\_wei@mymail.sutd.edu.sg}}
    
    \thanks{\IEEEauthorblockA{\IEEEauthorrefmark{5}Institute for Infocomm
Research, A∗STAR, Singapore,
    Email: liu\_fayao@i2r.a-star.edu.sg, chen0832@e.ntu.edu.sg, wumin@i2r.a-star.edu.sg }}
    
    \thanks{\IEEEauthorblockA{\IEEEauthorrefmark{1}Zhihao Wen and Pengcheng Wei contributed equally to this work.}}
    \thanks{\IEEEauthorblockA{\IEEEauthorrefmark{2}Corresponding Author: Yuan Fang.}}
}

\author{Zhihao Wen, Yuan Fang,~\IEEEmembership{Senior Member,~IEEE}, Pengcheng Wei, Fayao Liu, Zhenghua Chen,~\IEEEmembership{Senior Member,~IEEE}, Min Wu,~\IEEEmembership{Senior Member,~IEEE}
\thanks{Zhihao Wen and Yuan Fang are with School of Computing and Information Systems, Singapore Management University, Singapore, 188065. (Email: zhwen.2019@phdcs.smu.edu.sg, yfang@smu.edu.dg)}
\thanks{Pengcheng Wei is with Information Systems Technology and Design Pillar, Singapore University of Technology and Design, Singapore. (Email: pengcheng\_wei@mymail.sutd.edu.sg)}
\thanks{Fayao Liu, Zhenghua Chen and Min Wu are with Institute for Infocomm Research, A*STAR, Singapore. (Email: liu\_fayao@i2r.a-star.edu.sg, chen0832@e.ntu.edu.sg, wumin@i2r.a-star.edu.sg)}
\thanks{Corresponding author: Yuan Fang.}}

\maketitle

\begin{abstract}
Predicting Remaining Useful Life (RUL) plays a crucial role in the prognostics and health management of industrial systems that involve a variety of interrelated sensors. Given a constant stream of time series sensory data from such systems, deep learning models have risen to prominence at identifying complex, nonlinear temporal dependencies in these data. 
% Derived from various sensors, spatial dependencies manifest as sensor correlations. Current approaches face limitations in adequately modeling and capturing these spatial dependencies, hindering their ability to learn representative features crucial for RUL prediction. To address these challenges, we introduce a novel
In addition to the temporal dependencies of individual sensors, spatial dependencies emerge as important correlations among these sensors, which can be naturally modelled by a temporal graph that describes time-varying spatial relationships. However, the majority of existing studies have relied on capturing discrete snapshots of this temporal graph, a coarse-grained approach that leads to loss of temporal information. Moreover, 
given the variety of heterogeneous sensors, it becomes vital that such inherent heterogeneity is leveraged for RUL prediction in temporal sensor graphs. 
To capture the nuances of the temporal and spatial relationships and heterogeneous characteristics in an interconnected graph of sensors, we introduce a novel model named Temporal and Heterogeneous Graph Neural Networks (THGNN). Specifically, THGNN aggregates historical data from neighboring nodes to accurately capture the temporal dynamics and spatial correlations within the stream of sensor data in a fine-grained manner. Moreover, the model leverages Feature-wise Linear Modulation (FiLM) to address the diversity of sensor types, significantly improving the model’s capacity to learn the heterogeneity in the data sources. Finally, we have validated the effectiveness of our approach through comprehensive experiments. Our empirical findings demonstrate significant advancements on the N-CMAPSS dataset, achieving
% with a 10.6\% improvement in RMSE and a 20.4\% increase in Score.
improvements of up to 19.2\% and 31.6\% in terms of two different evaluation metrics over state-of-the-art methods.
\end{abstract}

\begin{IEEEkeywords}
Remaining Useful Life Prediction, Graph Neural Network, Time Series Signals, Sensor Correlations
\end{IEEEkeywords}

\section{Introduction}\label{sec:intro}

\lettrine[lines=2,slope=-4pt,nindent=-4pt]{P}{redicting} the Remaining Useful Life (RUL) of modern machinery is a pivotal aspect of machine health monitoring \cite{ren2022mctan, gao2020neural, li2021degradation}, which is essential for reducing maintenance costs, preventing unexpected failures, and enhancing system reliability. Deep Learning (DL) techniques have emerged as the state of the art in the RUL prediction field \cite{wang2020remaining}, given their superior ability to extract powerful features from complex data, thereby establishing new performance standards. These methods primarily employ temporal encoders, such as Convolutional Neural Networks (CNNs) \cite{li2021degradation, sateesh2016deep, wen2019new} and Long Short-Term Memory (LSTM) networks \cite{fu2021novel, zhang2018long, chen2020machine, xu2021kdnet}, which are adept at detecting temporal patterns in time-series data, a critical factor in accurate RUL prediction. Typically, RUL prediction often relies on Multivariate Time-Series (MTS) data collected from various sensors. These sensors monitor different machine aspects, such as temperature, pressure, and fan speed. The sensor measurements reveal significant interactive dynamics that often arise from the spatial arrangement of the sensors. For instance, an increase in fan speed is frequently associated with a rise in temperature in the vicinity, highlighting complex interrelations or interdependencies between these sensor measurements. Recognizing these interdependencies \cite{wang2023local} is crucial for improving the accuracy of RUL prediction.

Despite this, many studies have overlooked these spatial correlations, which has hindered the effectiveness of RUL prediction. A handful of recent studies have started to explore the importance of spatial dependencies, attempting to include them by using 2D-CNNs \cite{zhao2020double} or by conduction preliminary analysis of sensor correlations \cite{hong2020multivariate}. However, a gap remains in fully leveraging this intricate spatial information, which in turn limits the performance of RUL prediction models. To bridge this gap, some researchers have turned to constructing graphs to capture the full spatial relationships among the sensors \cite{wang2023local}.
Specifically, various components within the machine, such as the sensors in an aircraft engine (see an example illustration in Figure 2 of a previous work \cite{arias2021aircraft}), can be conceptualized as nodes in a graph. The spatial relationships among them naturally constitute the edges within this graph.
Given this sensor graph, they futher leverage Graph Neural Networks (GNNs) to effectively capture various spatial dependencies.

\stitle{Challenges and Our Approach.}
While we can formulate the spatial relationships among sensors into a graph, addressing the spatial dependencies for  RUL prediction still presents two significant challenges.

\textit{Challenge 1: Utilizing fine-grained temporal information in graph structures for RUL prediction.}
The sensor graph is inherently temporal, where a continuous steam of values is captured for each sensor over time.
% Gradual degradation in industrial systems often manifests in subtle temporal patterns that snapshot-based methods fail to capture, resulting in prediction inaccuracies.
While gradual degradation in industrial systems often manifests in subtle temporal patterns, prior research has often segmented temporal graphs into static snapshots \cite{du2018dynamic, li2018deep, goyal2018dyngem, zhou2018dynamic, sankar2020dysat, pareja2020evolvegcn}, a method that simplifies the modeling process but fails to capture the fine-grained evolving dynamics over time due to the inevitable loss of temporal information within each snapshot, as shown in Figure~\ref{fig:segmentation} (a).
Conversely, continuous-time approaches \cite{nguyen2018continuous, xu2020inductive} struggle to model complex interactions and how past events influence future occurrences throughout the  graph. Drawing inspirations from the Hawkes process \cite{hawkes1971spectra, mei2017neural,wen2022trend}, which highlights how past events can stimulate or ``excite" future occurrences, our model focuses on aggregating comprehensive historical information rather than relying on simplified historical graph snapshots. This approach enables the modelling of complex temporal dependencies in a fine-grained manner.

\textit{Challenge 2: Addressing heterogeneity among diverse types of sensor for RUL prediction.}
Our constructed sensor graph is not only temporal, but also heterogeneous, encompassing a variety of sensor types as illustrated in Table~\ref{tab:heterogeneity}.  Hence, our sensor graph includes a diverse array of node and edge types, forming what is known as a \emph{heterogeneous information network} or \emph{heterogeneous graph}  \cite{wang2022survey}. While heterogeneous graphs are widely used in various data mining tasks due to their rich semantic content, they have not been extensively studied in the temporally evolving setting. In particular, previous studies on RUL prediction \cite{wang2023local} have not adequately considered the inherent heterogeneity of various machine components and sensor measurements in the context of a spatially structured graph. To address the heterogeneity, it is important to strike a balance between preserving the unique individual characteristics of each sensor type and modeling the global patterns of heterogeneous information from different sensor types. On one hand, fitting one model for each sensor type  would overlook the global patterns; on the other hand, fitting a single global model would treat all types uniformly and miss out on the distinct features of each sensor type. To this end, we employ Feature-wise Linear Modulation (FiLM) \cite{perez2018film}, which enables individual adaptation to each sensor type from an underlying global model that aims to capture higher-level patterns across different types.
%effectively manage sensor heterogeneity, thereby enhancing the model’s learning capabilities from diverse data.

\stitle{Contributions.}
To address the above challenges, we proposed a novel approach called \emph{Temporal and Heterogeneous Graph Neural Networks}
(THGNN) for the problem of RUL prediction. THGNN employs a graph neural network backbone, aiming to model the spatial relations between sensors in a continuously timed temporal graph, as well as the diverse characteristics of various sensor types in a heterogeneous graph. 
In summary, this work makes the following notable contributions.
\begin{itemize}
    \item We identify the importance of capturing both the fine-grained temporal dynamics and diverse semantics of heterogeneous sensor types in RUL prediction.
    \item We model the RUL prediction problem via a continuous temporal and heterogeneous graph, proposing a novel model called THGNN. On the temporal side, to enable historical events to excite future events, THGNN aggregates historical information from neighboring sensor nodes, thereby capturing continuous temporal dynamics and spatial correlations in one model. On the heterogeneous side, we adapt to individual sensor types from a common model, allowing for a balance between the individual characteristics of each sensor type and the global patterns across different types. 
    \item We conduct extensive experiments on the C-MAPSS and N-CMAPSS benchmarks to demonstrate the effectiveness of our approach on the real-world engine data. Notably, on the newer N-CMAPSS dataset, which presumably mirrors real-world scenarios more closely, our method THGNN has achieved a 19.2\% improvement in RMSE and a 31.6\% boost in Score over state-of-the-art baselines, highlighting the practical relevance and effectiveness of our proposed solution.
\end{itemize}

\begin{figure}[t]
   \centering
   \includegraphics[width=0.8\linewidth]{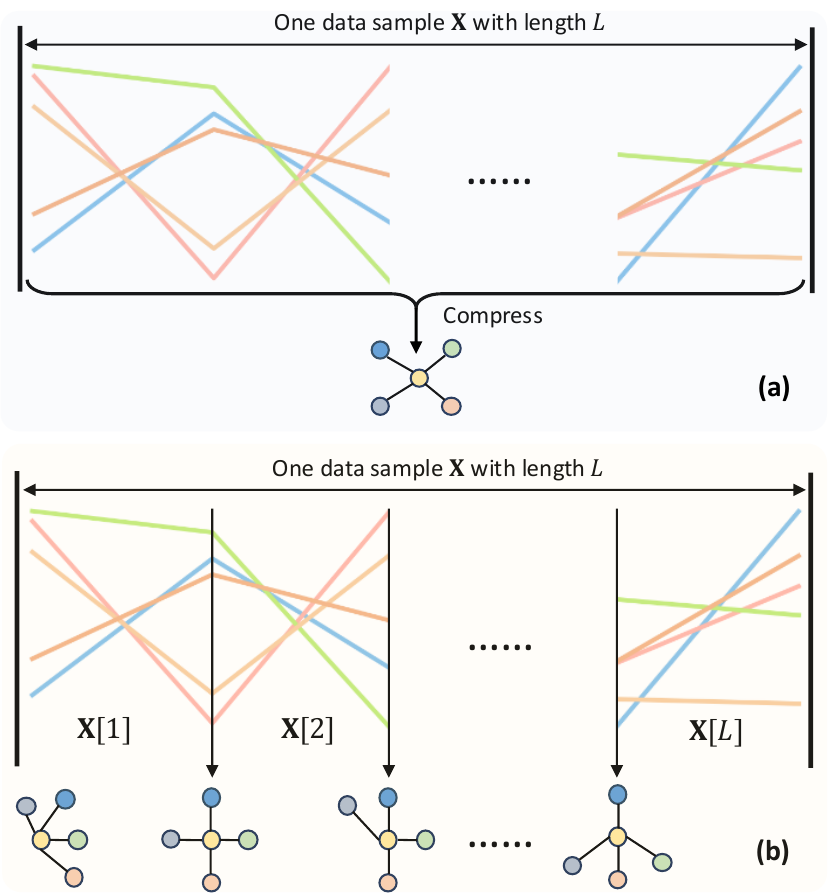}
   % \vspace{-2mm}
	\caption{
 Illustration of one MTS data sample $\vec{X}$ across $L$ timestamps.
 % we have harnessed the information at every minimal time step. 
 % Previous work \cite{wang2023local} formulates MTS data into graph snapshots, where each snapshot represents a condensed version of multiple timestamps over a certain period, leading to the loss of fine-grained temporal information within each snapshot. In contrast, our approach retains the full temporal information by considering \textbf{every minimal time step, taking each timestamp as one distinct step}}.
 Previous snapshot-based methods \cite{wang2023local, wang2021spatio} typically \textbf{compress} MTS data from a time window into \textbf{a single ``flat" graph} (or several graphs but at a coarser granularity than every step), often leading to the \textbf{loss of critical fine-grained temporal details}, as shown in (a). 
 In contrast, our method constructs a graph \textbf{at each time step} and performs \textbf{fine-grained temporal aggregation}, accounting for historical dependencies across \textbf{every time step}.
 This ensures that our model fully leverages all available temporal information without compressing or discarding details.
 }
 
	% \vspace{-2mm}
	\label{fig:segmentation}
\end{figure}

\begin{table}[t]
  \centering
  \small
  % \centering
  % \footnotesize 
  %\addtolength{\tabcolsep}{-3pt}
  \caption{\textbf{Heterogeneity of the sensors}.
  An engine involves heterogeneous types of sensor, including those that measure wind speed, temperature, and other parameters \cite{frederick2007user,arias2021aircraft}.
  Not only do these sensors exhibit significant differences in the numerical values of their measurements, but the magnitude and frequency of changes in their measurements also vary widely. ``Type id'' is a unique identifier we have assigned to each type of sensor. 
  In addition, different sensors have different temporal behaviors.
  }
  \begin{tabular}{c|l|c|c}
\toprule
\textbf{Sensor} & \textbf{Description} & \textbf{Unit} & \textbf{Type id} \\
\midrule
T24 & Total temperature at LPC outlet & $^\circ $R & 1 \\
T30 & Total temperature at HPC outlet & $^\circ $R & 1 \\
T50 & Total temperature at LPT outlet & $^\circ $R & 1 \\
P30 & Total pressure at HPC outlet & psia & 2 \\
Ps30 & Static pressure at HPC outlet & psia & 2 \\
Nf & Physical fan speed & rpm & 3 \\
Nc & Physical core speed & rpm & 3 \\
Phi & Ratio of fuel flow to Ps30 & pps/psi & 4 \\
NRf & Corrected fan speed & rpm & 5 \\
NRc & Corrected core speed & rpm & 5 \\
BPR & Bypass ratio & -- & 6 \\
htBleed & Bleed enthalpy & -- & 6 \\
W31 & HPT coolant bleed & lbm/s & 7 \\
W32 & LPT coolant bleed & lbm/s & 7 \\
\bottomrule
\end{tabular}
  \label{tab:heterogeneity}
\end{table}

\section{Related work}\label{sec:related work}

In this section, we present an overview of the literature for RUL prediction in two broad categories: conventional deep learning-based approaches and graph-based approaches.

\subsection{Conventional Deep Models for RUL Prediction}

Deep learning (DL) models, established for their robust feature-learning capabilities, have emerged as a promising direction for RUL prediction. The adoption of convolutional neural networks (CNNs), initially spurred by their success in computer vision, has been extended to address industrial problems like fault diagnosis \cite{song2021early, wen2020new} and surface defect recognition \cite{wang2020remaining}. For RUL prediction, early efforts have focused on 1D-CNN applications \cite{sateesh2016deep,wen2019new}. 
%, with Baru et al.~\cite{sateesh2016deep} demonstrating a 1D-CNN framework that has outperformed shallow learning models. Wen et al.~\cite{wen2019new} has further introduced a k-fold ensemble-based residual 1D-CNN model, enhancing RUL prediction accuracy.
Meanwhile, recurrent neural networks have also played a crucial role in advancing RUL prediction. Numerous studies have explored the long-short term memory (LSTM) and its variants, with notable contributions like the bi-directional LSTM model \cite{zhang2018long} in identifying patterns in time-series data, surpassing other machine learning approaches. Later on, Chen et al.~\cite{chen2020machine} has integrated LSTM with attention mechanisms to highlight key features and time steps, while Xu et al.~\cite{xu2021kdnet} has integrated knowledge distillation with LSTM, yielding robust predictions.

Recently, Transformer-based models \cite{vaswani2017attention}, known for their powerful pairwise attention mechanism, have gained traction, inspiring extensive research to leverage their temporal dependency modeling in time-series forecasting tasks \cite{zerveas2021transformer, wu2021autoformer, zhou2021informer}. These novel approaches have significantly propelled the field of RUL prediction forward.

However, existing research with a predominant focus on temporal dependencies has overlooked the importance of spatial dependencies, i.e., sensor correlations, which are crucial for the comprehensive modeling of MTS data. Recent studies \cite{zhao2020double, hong2020multivariate, jiang2022adversarial, wang2021deep} have recognized this gap, attempting to capture spatial dependencies by integrating sensor correlations into RUL prediction models. Approaches employing 2D-CNN \cite{zhao2020double} or 3D-CNN \cite{jiang2022adversarial, wang2021deep} have been attempted, but these CNNs do not inherently discern sensor correlations, resulting in a gap in modeling these correlations fully and effectively. Other methods include using sensor correlations as a pre-processing step \cite{hong2020multivariate}, yet this overlooks their potential in feature learning for RUL prediction. Thus, there is a pressing need for methodologies that fully capture spatial dependencies in end-to-end learning  to enhance the accuracy of RUL prediction.

\subsection{Predicting RUL with Graph Neural Networks}

Graph neural networks (GNNs) have been designed to model correlations between entities \cite{kipf2016semi,zhao2019t,jia2021multi,liu2022dual}, which has catalyzed research in RUL prediction to capture sensor correlations. For instance, Wei et al.~\cite{wei2023bearing} have employed GNNs on graphs constructed from various timestamps to improve RUL prediction. Zhang et al.~\cite{zhang2021adaptive} have proposed an adaptive spatio-temporal GNN specifically tailored for RUL prediction, while Li et al.~\cite{li2021hierarchical} have developed a hierarchical attention-based GNN that efficiently synthesizes signals from multiple sensors. These GNN-based approaches have consistently outperformed conventional deep learning-based methods that do not account for sensor correlations.
Although spatiotemporal GNN methods \cite{Cini2023GraphDL} like Graph WaveNet \cite{Wu2019GraphWF} , AGCRN \cite{Bai2020AdaptiveGC}, GNNs for human pose prediction \cite{Fu2022LearningCD} and traffic prediction \cite{Zhao2018TGCNAT}, and heterogeneous GNNs \cite{Jin2023ASO, Deng2020DynamicKG},  have been proposed, they have not been applied to MTS RUL prediction.

% However, one critical limitation of these GNN-based methods is that they segment temporal graphs into discrete, static snapshots \cite{du2018dynamic, li2018deep, goyal2018dyngem, zhou2018dynamic, sankar2020dysat, pareja2020evolvegcn}. The discretization simplifies the modeling process but fails to capture the evolving temporal dynamics within each snapshot.

However, one critical limitation of many GNN-based methods for RUL prediction is their segmentation of temporal graphs into discrete, static snapshots \cite{wang2023local, wei2023bearing, zhang2021adaptive, li2021hierarchical, wang2021spatio}. This simplification leads to the loss of fine-grained temporal dynamics, which are critical for accurately capturing the degradation patterns in machinery.
Another significant drawback is that they overlook the heterogeneity \cite{wang2022survey} of the various sensors. In a modern system, there often exist a wide variety of sensors, which measure different physical quantities and exhibit significant variations in their values and trends.

\section{Preliminaries}

In this section, we formalize the problem statement, sensor heterogeneity, and present an introduction on graph neural networks as the backbone architecture.

\subsection{Problem definition}
% \wen{need rewrite to emphasize the difference of our model and snapshot based model,  added a discussion in Section III highlighting the relationship between our approach and attention mechanisms}
Consider a Multivariate Time-Series (MTS) dataset, which is composed of \( K \) labeled samples \( \{\vec{X}_{i}, y_{i}\}_{i=1}^{K} \). Each MTS sample \( \vec{X}_{i} \in \mathbb{R}^{L \times N} \) is derived from \( N \) sensors across \( L \) timestamps, as illustrated in Figure~\ref{fig:segmentation}. The label of each sample, $y_i\in \mathbb{R}$, is a numerical value representing the remaining useful life of the machine at the point of capturing the sample $\vec{X}_{i}$. The aim is to develop a GNN-based regression model to predict the RUL of each sample, focusing on capturing  the spatial dependencies of the sensors, as well as the dynamic and heterogeneous nature of the MTS data. To simplify the notation, we omit the subscript \( i \) when there is no ambiguity, thus denoting each sample as \( \vec{X} \).
And as shown in Figure~\ref{fig:segmentation} (b),  \( \vec{X}[L] \) refers to the temporal features of sensors at the $L$-th time step, used for $L$-th graph construction.

\subsection{Sensor heterogeneity}
We define \emph{sensor heterogeneity} as the presence of diverse sensor types and their distinct temporal behaviors within a multivariate time-series system. Formally, let $\mathcal{S} = \{s_1, s_2, \ldots, s_N\}$ denote the set of $N$ sensors, where each sensor $s_i$ is associated with:
\begin{itemize}
    \item a semantic type identifier $t_i \in \mathcal{T}$ (e.g., pressure, temperature, speed), and
    % \item a temporal signal $\mathbf{x}_i = \{x_i^1, x_i^2, \ldots, x_i^L\}$ over $L$ time steps.
    \item a temporal signal $\mathbf{x}_i = \{x_i^1, x_i^2, \ldots, x_i^L\}$ over $L$ time steps, where each $x_i^t \in \mathbb{R}$ denotes the scalar reading of sensor $s_i$ at time $t$.
\end{itemize}
We formalize sensor heterogeneity as a joint distribution:
\begin{align}
    \mathcal{H} = P(t_i, \mathbf{x}_i), \quad i = 1, \ldots, N,
\end{align}
which captures both the semantic (type-level) and dynamic (time-dependent) variability among sensors.

This definition serves as the basis for the heterogeneity-aware representations and interactions introduced in subsequent sections.

\subsection{Graph neural networks}
Our methodology is based on graph neural networks (GNNs), which possess an inductive capacity to handle new samples (graphs) assuming a shared feature space across existing and new nodes. Hence, they are well suited for MTS data, given common multivariate dimensions across different samples and timestamps. Next, we provide a concise overview of GNNs in the following.

Contemporary GNNs predominantly follow a message-passing framework. In this framework, each node in a graph receives, aggregates and update messages (i.e., features) from its adjacent nodes, enabling a process of messages passing along the graph structures. The message-passing layer can be stacked multiple times to broaden the receptive field of each node. More specifically, at each layer, the process can be mathematically represented as below.
\begin{align}
    \vec{h}^{l+1}_v = \bM\left(\vec{h}^{l}_v, \left\{\vec{h}^{l}_u\mid \forall u \in \bN_v\right\}; \vec{W}^l\right),
\end{align}
where $\vec{h}^l_v$ denotes the $l$-th layer's message (which is a $d_l$-dimensional embedding vector) for node $v$, $\bN_v$ represents the set of neighboring nodes of $v$, $\vec{W}^l$ is a learnable weight matrix  that transforms the node embeddings from the $l$-th layer, and $\bM(\cdot)$ is a function that aggregates the messages.
The initial message for node $v$ at the input layer is essentially the node's original features, denoted as $\vec{h}^1_v\equiv\vec{x}_v$.

\begin{figure*}
   \centering
   \includegraphics[width=1\linewidth]{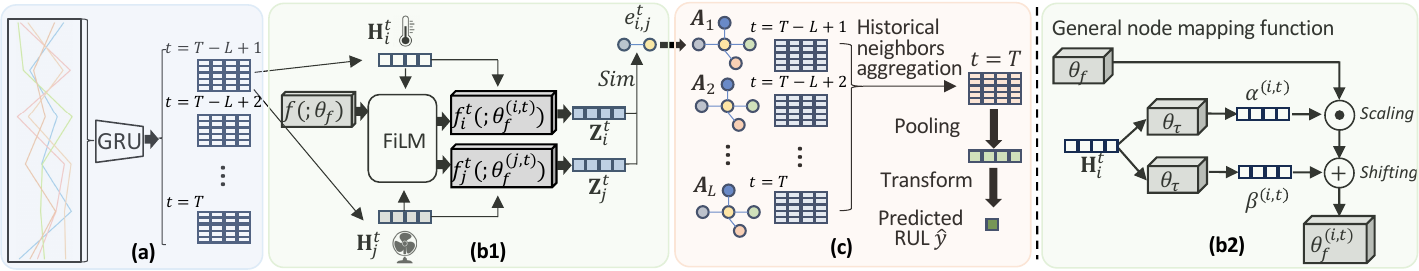}
   % \vspace{-2mm}
	\caption{Overall framework of \model.
 \textbf{(a)} We input a sample into a GRU, yielding $L$ embedding matrices \(\{\vec{H}_t\mid t\}\) for each time step $t$. \textbf{(b1)} For a given time step $t$, we construct edges by calculating the similarity between each pair of sensors, where \textbf{(b2)} describes the details of the FiLM layer in (b1). Consequently, we obtain a temporal graph across the $L$ time steps. 
  %the latent embeddings \(\vec{Z}^t_i\) and \(\vec{Z}^t_j\) of sensors $i$ and $j$. 
 %Consequently, we obtain the graph for time $t$. Similarly, we generate $L$ graphs for the time steps from $t=T-L+1$ to $t=T$. 
 \textbf{(c)} We perform message passing on the temporal graph at each time step, feeding in the corresponding embedding matrix \(\vec{H}_t\) at step $t$, and obtain $L$ latent embedding matrices across the time steps. The $L$ embedding matrices are further aggregated and processed to obtain the final representation at the target time step $T$. Lastly, we predict the RUL at $t=T$ based on the final representation. 
 }
	\vspace{-2mm}
	\label{fig:framework}
\end{figure*}

\section{Proposed approach}

In this section, we introduce our approach called Temporal and Heterogeneous Graph Neural Networks (\model). The overall structure of \model\ is shown in Fig.~\ref{fig:framework}, given MTS data as input, 
we employ a more granular approach to graph construction while also taking into account the heterogeneity of sensors.

\subsection{Sensor Temporal Representation}
As depicted in Fig~\ref{fig:framework}(a), our input consists of MTS data. Conventionally, recurrent neural networks (RNNs) are used directly to exploit temporal data sequentially for end-to-end prediction. However, this approach has a significant limitation: It fails to capture and utilize the structural information in the lateral graph that exists between the multivariate dimensions. Thus, our methodology focuses on leveraging the strengths of RNNs in processing temporal information to obtain an initial representation of the MTS data. 

Formally, we have a sample \(\vec{X} \in \mathbb{R}^{L \times N}\), which originates from \(N\) sensors on \(L\) timestamps. We then feed \(\vec{X}\) into a GRU \cite{cho2014learning}, a straightforward yet effective recurrent neural network architecture, which outputs \(L\) temporal representation matrices, as follows.
\begin{align}
    \vec{H}_{T-L+1},\vec{H}_{T-L+2}, \ldots, \vec{H}_{T} = \textsc{GRU}(\vec{X};\theta_{r}), 
    \label{eq:temporal representation matrices}
\end{align}
where \(\vec{H}_{t} \in \mathbb{R}^{N \times d}\) is the representation for time step $t \in\{T-L+1,T-L+2,\ldots,T\}$, \(N\) represents the number of sensors, \(d\) is the hidden dimension, and $\theta_{r}$ is the parameter set of the GRU. 
% This matrix encapsulates the representation of each sensor at time \(t\).
We choose GRU instead of the popular self-attention layer, because GRU is effective and robust, particularly on datasets with limited training samples.

\subsection{Constructing a Temporal Heterogeneous Sensor Graph}

In this subsection, we aim to construct a graph to capture the correlations between heterogeneous sensor signals within and across time steps.

\stitle{Heterogeneous Sensor Graph.}
In a complex system, sensors exhibit spatial relationships due to the intricate arrangement of the sensors in the system. Hence, within each time step, the readings from different sensors are often correlated given their spatial relationships. 
While some prior works in spatio-temporal GNNs have used physical sensor locations to define graph links, this approach often proves \textbf{suboptimal for RUL prediction} \cite{wang2021spatio}. Sensors that are spatially adjacent may differ significantly in behavior, while those that are spatially distant may exhibit strong correlations. Therefore, like many GNN-based RUL prediction methods \cite{wang2023local}, we opted to use cosine similarity to construct the graph, ensuring that it reflects the true functional relationships between sensors.
Regarding the use of fully connected graphs, while such graphs can be inefficient in scenarios with many nodes, our task involves relatively \textbf{small graphs} due to the modest number of sensors in typical industrial systems. Consequently, the computational overhead is negligible, and standard resources can handle the fully connected graph efficiently. This approach also ensures that \textbf{no potentially relevant sensor relationships are omitted}.
Specifically, the cosine similarity between the representations of the two sensor nodes at time $t$, \( \vec{H}^{t}_{i} \) and \( \vec{H}^{t}_{j} \) is:
\begin{align}
    \vec{A}_t(i,j) = \langle \vec{H}^{t}_{i}, \vec{H}^{t}_{j} \rangle,
\end{align}
where $\vec{A}_t(i,j) \in [0, 1]$ denotes the edge weight between $i$ and $j$, and $\langle \cdot \rangle$ denotes the normalized cosine similarity.

%need change the Figure into Table
However, this method overlooks the heterogeneity of sensors in a modern machine. In a complex system, \textbf{various sensors often belong to multiple types, measuring heterogeneous  physical quantities (e.g., fan speed, temperature, or pressure)}, as illustrated in Table~\ref{tab:heterogeneity}. On one hand, the heterogeneity adds to the complexity of the local correlations among sensors. Not only do these sensors differ significantly in their numerical readings, but their fluctuations in the readings also exhibit distinct patterns. In other words, the representations between different types of sensors show considerable differences, and directly computing their similarities without considering these disparities is problematic. On the other hand, the heterogeneity also provide additional insights for comprehensively capturing the spatial dependencies among sensors. 

While previous works have utilized graph structures for machine RUL prediction \cite{zhang2021adaptive, li2021hierarchical, wei2023bearing}, none of them has accounted for the heterogeneity of sensors. Simultaneously, although there are numerous studies on heterogeneous graphs \cite{wang2022survey}, they have not been applied to the task of RUL prediction on MTS data.  
To encapsulate the heterogeneity of sensors into RUL prediction, we first discern the edges in the sensor graph.
A classical approach involves the use of meta-paths \cite{fu2020magnn}, 
%which refer to sequences of node and edge types connecting two nodes. This concept 
which serve as high-level abstractions for defining semantic relationships between nodes in a heterogeneous graph. 
% For instance, in a citation network with nodes representing papers and authors, and edges indicating citations and collaborations, a meta-path might be "Paper - cites -\> Paper - written by -\> Author". This meta-path delineates a sequence of relationships between papers and authors, highlighting the citation and authorship connections.
However, meta-paths are often predefined based on domain knowledge, which has not been studied in sensor graphs. 
%in our engine graph, we lack predefined edges, preventing us from further defining our meta-paths.

Instead, we could establish a learnable mapping function $f_{\phi}(\cdot)$ for each node (sensor) type $\phi$ \cite{wang2019heterogeneous}, which maps the representations of different types of nodes to the same space:
\begin{align}
    \vec{A}_t(i,j) = \langle f_{\phi(i)}(\vec{H}^{t}_{i}), f_{\phi(j)}(\vec{H}^{t}_{j}) \rangle,
    \label{eq:hete functions}
\end{align}
where $\phi(i)$ is the type ID of sensor $i$. 
However, this approach requires learning one set of parameters for each sensor type, which would cause a substantial increase in the number of parameters for a large number of types. 
Furthermore, this approach only captures coarse-grained differences between types, failing to account for the fine-grained differences within each type. For instance, despite two sensors both measuring temperature, deployment in different physical environment can still cause significant differences in their readings. 
Extending this observation, even the same sensor could exhibit significant variations across time steps. Hence, in the following, we propose a universal mapping function that can be adapted to different nodes and time steps.

\stitle{Universal Adaptation across Nodes and Time.}
To develop a universal mapping function that is both parameter-efficient and capable of adapting to heterogeneous nodes and different time steps, we introduce a \emph{temporal sensor-conditioned mapping} approach. 

Consider a global mapping function $f(\cdot;\theta_f)$ to map node embeddings into a unified space.
We adopt a multi-layer perceptron (MLP) with a two-layer bottleneck structure \cite{zhou2022conditional} for the implementation of the global function $f(\cdot)$.
To adapt the global function to different nodes and time steps, we transform the global parameters $\theta_f$ into sensor and time-specific parameters $\theta^{(i,t)}_f$, for any sensor $i$ and time $t$. That is, we derive a local mapping function for each sensor $i$ and time $t$, $f^t_i(\cdot;\theta^{(i,t)}_f)$, where the local parameters $\theta^{(i,t)}_f$ are generated by a secondary function $\tau(\cdot)$, as follows.
\begin{align}
    \theta^{(i,t)}_f = \tau (\theta_f, \vec{H}_{i}^{t}; \theta_\tau),
    \label{eq:tau}
\end{align}
which is parameterized by $\theta_\tau$. Importantly, its input is conditioned on both the global parameters $\theta_f$, and the latent representation $\vec{H}_{i}^{t}$ for sensor $i$ at time $t$, thereby adapting the global parameters to this specific sensor and time.

This approach can be viewed as a form of hypernetwork \cite{DBLP:conf/iclr/HaDL17}, where a secondary neural network ($\tau$) generates the local parameters for the primary network ($f$). The advantage of our temporal sensor-conditioned approach is threefold. First, it is parameter-efficient as only the global parameters, $\theta_f$, and the parameters of the secondary network, $\theta_\tau$, are learnable. Second, the global parameters can capture the common patterns across the sensors and time. Third, the secondary network enables quick adaptation to each sensor and time step, thereby capturing their unique characteristics. 

\stitle{Materialization of $\tau$.}
We employ \emph{Feature-wise Linear Modulation} (FiLM) \cite{perez2018film, wen2022trend, wen2021meta}, which utilize affine transformations including scaling and shifting on the global parameters $\theta_{f}$, the parameters of the global mapping function $f(\cdot)$, conditioned on the temporal node representation $\vec{H}^{t}_{i}$, as shown in Figure~\ref{fig:framework}(b2).
Compared with gating \cite{xu2015show}, FiLM is more flexible in adjusting the parameters and can be conditioned on arbitrary input, as shown below.
\begin{align}
\theta_{f}^{(i,t)}=\tau (\theta_{f}, \vec{H}^{t}_{i}; \theta_\tau)=(\alpha^{(i,t)} + \vec{1})\odot \theta_{f} + \beta^{(i,t)},
\label{eq:learnable transform}
\end{align}
where $\odot$ stands for element-wise multiplication, $\alpha^{(i,t)}$ and $\beta^{(i,t)}$ perform element-wise scaling and shifting, respectively, and $\vec{1}$ is a vector of ones to ensure that the scaling factors are centered around one.
Note that $\theta_{f}$ contains all the weights and biases of $f(\cdot)$, and we flatten it into a $d_{f}$-dimensional vector. Hence, $\alpha^{(i,t)}$ and $\beta^{(i,t)}$ are $d_f$-dimensional vectors, too.

Specifically, we employ fully connected layers to generate the scaling vectors $\alpha^{(i,t)}$ and shifting vectors $\beta^{(i,t)}$, conditioned on the temporal node representation $\vec{H}^{t}_{i}$, as follows. 
\begin{align}
\alpha^{(i,t)}=\sigma(\vec{H}^{t}_{i} \vec{W}_{\alpha} + \vec{b}_{\alpha})
,\\
\beta^{(i,t)}=\sigma(\vec{H}^{t}_{i}\vec{W}_{\beta} + \vec{b}_{\beta}), 
\end{align} 
where $\vec{W}_{\alpha}, \vec{W}_{\beta} \in \mathbb{R}^{d \times  d_{f}}$ and $\vec{b}_{\alpha},\vec{b}_{\beta} \in \mathbb{R}^{d_{f}}$ are learnable weight matrices and bias vectors of the fully connected layers.
%, in which $d$ is the dimension of node representation and $d_{f}$ is the total number of parameters in the general node mapping function $f(\cdot)$. 
%The output $\alpha^{(i,t)},\beta^{(i,t)} \in \mathbb{R}^{d_{f}}$ are both $d_{f}$-dimensional vectors, which represent the scaling and shifting operations of the transformation model $\tau$ in Eq.~\eqref{eq:tau}. They are used to transform the \emph{parameters} $\theta_{f}$ of the general node mapping function $f(\cdot)$ into temporal node $(i,t)$-specific parameters by element-wise scaling and shifting, given by 

In summary, the learnable transformation model $\tau$ is parameterized by $\theta_\tau=\{\vec{W}_{\alpha}, \vec{b}_{\alpha}, \vec{W}_{\beta}, \vec{b}_{\beta}\}$, \ie, the collection of parameters of the fully connected layers that generate the scaling and shifting vectors. 
Furthermore, $\tau$ is also a function of the temporal node condition $\vec{H}^{t}_{i}$, for $\alpha^{(i,t)}$ and $\beta^{(i,t)}$ are functions of the temporal node condition.

\stitle{Temporal Heterogeneous Sensor Graph Construction.}
% After finishing the temporal node-conditioned transformation, we have 
% as shown in Figure~\ref{fig:framework} (b), 
The FilM layer materializes the local mapping functions $f^{t}_{i}(\cdot;\theta_{f}^{(i,t)})$ for sensor $i$ and time $t$. Given $\vec{H}^{t}_{i}$ and $\vec{H}^{t}_{j}$ as input to the FiLM, as shown in Figure~\ref{fig:framework}(b1),  we can map different nodes at different time steps into a unified space, and obtain the following transformed representation $\vec{Z}^{t}_{i} \in \mathbb{R}^{d}$:
%using the node mapping function, resulting in $\vec{Z}^{t}_{i}$ and $\vec{Z}^{t}_{j}$, as demonstrated by the following equation:
\begin{align}
    \vec{Z}^{t}_{i} = f^{t}_{i}(\vec{H}^{t}_{i};\theta_{f}^{(i,t)}).
   % \vec{Z}^{t}_{i}, \vec{Z}^{t}_{j} = f^{t}_{i}(\vec{H}^{t}_{i};\theta_{f}^{(i,t)}), f^{t}_{j}(\vec{H}^{t}_{j};\theta_{f}^{(j,t)}).
\end{align}
Next, we define the edge weight between sensor $i$ and node $j$ at time $t$ by calculating the cosine similarity of $\vec{Z}^{t}_{i}$ and $\vec{Z}^{t}_{j}$:
\begin{align}
    \vec{A}_t(i,j) = \langle \vec{Z}^{t}_{i}, \vec{Z}^{t}_{j} \rangle.
\end{align}

\subsection{Temporal Message Passing}

The temporal heterogeneous sensor graph, 
%Similarly, other edges in the graph at time $t$ are established in this manner. Once all edges are established, the graph at time $t$ is completely constructed. Following this procedure, 
as shown in Figure~\ref{fig:framework}(c), consists of a series of $L$ graph structures across the time steps $\{T-L+1,T-L+2,\ldots,T\}$. Hence, the full graph is characterized by the adjacency matrices at each time step, as follows. 
\begin{align}
    [\vec{A}_{1}, \vec{A}_{2}, \cdots, \vec{A}_{L}], 
\end{align}
where $A_1$ corresponds to $t=T-L+1$ and $A_L$ corresponds to $t=T$.
% Given the \(L\) temporal representation matrices in Eq.~\ref{eq:temporal representation matrices}, 

% \begin{align}
%     \vec{M}_{1}, \cdots, \vec{M}_{L} = \vec{A}_{1}\vec{H}_{T-L+1}\vec{W}_{nei}, \cdots, \vec{A}_{L}\vec{H}_{T}\vec{W}_{nei}, 
% \end{align}
At each time step, we 
%Building upon the previously obtained \(L\) temporal representation matrices, we can 
employ a straightforward message-passing GNN layer to transform and aggregate information from neighbors. This process results in \(L\) representation matrices that incorporate historical neighbor information, $\{\vec{M}_{s}\mid s=1,\ldots,L\}$, such that
\begin{align}
    %\vec{M}_{1}, \cdots, \vec{M}_{L} = \vec{A}_{1}\vec{H}_{T-L+1}\vec{W}_{nei}, \cdots, \vec{A}_{L}\vec{H}_{T}\vec{W}_{nei},
    \vec{M}_{s} = \vec{A}_{s}\vec{H}_{T-L+s}\vec{W}_\text{nei},
\end{align}
where $\vec{M}_{s} \in \mathbb{R}^{N \times d}$, 
%for $s \in \{1, 2, \cdots, L\}$, 
and $\vec{W}_\text{nei} \in \mathbb{R}^{d \times d}$ represents a learnable matrix shared among all time steps. $\{\vec{M}_{s}\mid s=1,\ldots,L\}$ will undergo further temporal aggregation to leverage historical information.

% \stitle{Interaction Between Heterogeneity and Temporal Aggregation.}
% Let $\mathbf{H}_t \in \mathbb{R}^{N \times d}$ denote the GRU-encoded representations of $N$ sensors at time $t$, and let $\mathbf{Z}_t = f_t(\mathbf{H}_t) \in \mathbb{R}^{N \times d}$ represent the FiLM-adapted embeddings that incorporate sensor heterogeneity. The temporal graph at time $t$ is constructed via cosine similarity between these embeddings:
% \[
% A_t(i,j) = \cos\left(Z_t^{(i)}, Z_t^{(j)}\right),
% \]
% where $Z_t^{(i)}$ is the embedding of sensor $i$ at time $t$.

% Message passing on the graph yields:
% \[
% \mathbf{M}_t = \mathbf{A}_t \mathbf{H}_t \mathbf{W}_{\text{nei}}.
% \]
% To aggregate across $L$ historical steps, we use a learnable exponential decay inspired by the Hawkes process:
% \[
% \mathbf{H}_{\text{nei}} = \sum_{s=1}^{L} a_s \mathbf{M}_s, \quad \text{where} \quad
% a_s = \frac{\exp\left(-\mu (L-s)\right)}{\sum_{j=1}^{L} \exp\left(-\mu (L-j)\right)}.
% \]

% Since each $\mathbf{A}_t$ is constructed based on FiLM-adjusted representations $\mathbf{Z}_t$, the temporal aggregation is implicitly conditioned on sensor heterogeneity. Specifically, the similarity structure $\mathbf{A}_t$ reflects both temporal context and semantic variations across sensor types, thus enabling heterogeneity-aware temporal modeling. This integration enhances the model’s ability to emphasize recent and semantically relevant information, improving prediction performance on complex, real-world systems.

\stitle{Temporal Aggregation of Historical Neighbors.}
Inspired by the concept of the Hawkes process \cite{zuo2018embedding}, we aim to aggregate all historical neighbor information gathered across the past $L$ time steps, 
i.e., $\{\vec{M}_{s}\mid s=1,\ldots,L\}$, in a time-decaying manner.
Intuitively, the excitement of events decreases with their temporal distance from the present; events closer in time can excite current events to a greater extent. Specifically, the aggregate influence from historical neighbors is given by
\begin{align}
 \vec{H}_\text{nei} &= \textstyle \sum_{s=1}^{L} a_{s} \vec{M}_{s}, 
\end{align}
where 
\begin{align}
a_{s} &= \textstyle\frac{\exp(-\mu (L-s))}{\sum_{j=1}^{L} \exp(-\mu (L-j))}, 
\end{align}
in which $\mu \in \mathbb{R}$ is a learnable scalar. 

The final representation at the target time $T$ further incorporates $\vec{H}_{T}$, the self-information of each node at time $T$, as follows.
\begin{align}
 \vec{H}_\text{self} &= \vec{H}_{T} \vec{W}_\text{self}, \\
 \vec{H}_\text{all} &= \sigma(\sigma( \vec{H}_\text{nei} +  \vec{H}_\text{self}) \vec{W}_\text{all}),
\end{align}
where $\vec{H}_\text{self} \in \mathbb{R}^{N \times d}$, \(\sigma\) represents the LeakyReLU activation function, \(\vec{W}_\text{all} \in \mathbb{R}^{d \times d}\) is a learnable weight matrix, and \(\vec{H}_\text{all} \in \mathbb{R}^{N \times d}\) is the final representation matrix.

\stitle{Overall Prediction and Objective.}
To recap, our objective is to predict the RUL of a system at the target time $T$, given a sample spanning $t=T-L+1,\ldots,T$. In other words, we aim to forecast a property of the system at a sample level. As a sample is represented by a temporal heterogeneous sensor graph, it is necessary to pool the final representation matrix to acquire a graph-level embedding, as follows.
\begin{align}
\vec{h} &= \textsc{POOL}(\vec{H}_\text{all}),
\end{align}
where $\vec{h} \in \mathbb{R}^{d}$ represents the graph-level embedding, and $\textsc{POOL}(\cdot)$ refers to a pooling operation which is implemented as a simple mean pooling in our experiments.

To predict the system's RUL, which is a scalar value representing the remaining useful life in some given unit, we further transform this graph-level embedding to produce a predictive value, $\hat{y}$, for the RUL:
\begin{align}
\hat{y} &= \sigma(\vec{h}^\top \vec{W}_\text{out}),
\end{align}
where $\hat{y} \in \mathbb{R}$ is our final prediction for the RUL at time $T$, and $\vec{W}_\text{out}\in \mathbb{R}^{d\times 1}$ is a learnable vector.

Subsequently, we jointly optimize all parameters $\Theta=(\theta_{r},\theta_{f},\theta_\tau, \vec{W}_\text{nei}, \vec{W}_\text{self}, \vec{W}_\text{all}, \vec{W}_\text{out})$, which include those of the GRU $\theta_{r}$, the global mapping function $\theta_{f}$, the FiLM adaptation layer $\theta_\tau$, the various weights in the temporal message-passing module, namely, $\vec{W}_\text{nei}, \vec{W}_\text{self}, \vec{W}_\text{all}$, and the final prediction weights for the output layer $\vec{W}_\text{out}$. More specifically, $\Theta$ is optimized w.r.t.~the following squared error:
%the neighborhood transformation weight matrix $\vec{W}_{nei}$, the self-information transformation weight matrix $\vec{W}_{self}$, the whole information transformation weight matrix $\vec{W}_{trans}$, and the final scalar transformation weight matrix $\vec{W}_{out}$, based on minimizing the following loss:
\begin{align}
\arg \min_{\Theta} \textstyle\sum_{i=1}^K (\hat{y}_{i} - y_{i})^2, 
\end{align}
where and $\hat{y}_{i}$ is the predicted RUL value of the $i$-th sample in the training set, and $y_{i}$ is the corresponding ground truth.

\subsection{Theoretical Insight on Heterogeneity-aware Temporal Aggregation}
\begin{theorem}[Heterogeneity-aware Temporal Aggregation]
Let $\vec{Z}_s = f_s(\vec{H}_s) \in \mathbb{R}^{N \times d}$ denote the FiLM-adapted sensor embeddings at time $s$, where $f_s(\cdot)$ is conditioned on both temporal features and sensor type. Let $\vec{A}_s(i,j) = \langle \vec{Z}^{s}_{i}, \vec{Z}^{s}_{j} \rangle$ be the similarity-induced adjacency matrix. Then the aggregated temporal neighbor representation:
\begin{equation}
\vec{H}_\text{nei} = \sum_{s=1}^{L} a_s \vec{A}_s \vec{H}_s \vec{W}_\text{nei}
\end{equation}
is jointly conditioned on the temporal decay weights $a_s$ and the sensor heterogeneity via $\vec{A}_s$, and cannot be decomposed into separable time-only or type-only aggregation components.
\end{theorem}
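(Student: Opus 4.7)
The plan is to prove the statement in two parts: (a) verify that $\vec{H}_\text{nei}$ genuinely depends jointly on the temporal decay $\{a_s\}$ and the sensor-type structure encoded in $\vec{A}_s$; and (b) show that no factorization into separable time-only or type-only components exists. I would begin by formalizing \emph{separability}: a decomposition is \emph{time-only} if it can be written as $\vec{H}_\text{nei} = \left(\sum_{s=1}^L a_s \vec{H}_s\right) \vec{W}'$ for some $\vec{W}'$ independent of sensor identity, and \emph{type-only} if $\vec{H}_\text{nei} = \vec{B}\left(\sum_{s=1}^L \vec{H}_s\right) \vec{W}_\text{nei}$ for some $\vec{B}$ independent of $s$. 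The overall argument proceeds by contradiction with each form, and then generalizes to additive/low-rank hybrids.

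For the \emph{no time-only} direction, I would invoke (\ref{eq:hete functions})--(\ref{eq:learnable transform}): the FiLM parameters $\theta_f^{(i,s)} = (\alpha^{(i,s)}+\vec{1})\odot \theta_f + \beta^{(i,s)}$ inject sensor-specific modulation through the nonlinear $\sigma(\cdot)$ conditioning on $\vec{H}_i^s$, so $\vec{A}_s(i,j)$ is not of the form $g(s)\cdot c(i,j)$ for any time-independent matrix $c$. Consequently, $a_s \vec{A}_s$ cannot be pulled out as a scalar reweighting of a single fixed adjacency, which rules out collapsing the product $\vec{A}_s\vec{H}_s$ into a sensor-identity-agnostic sum. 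For the \emph{no type-only} direction, I would exploit the fact that the GRU output $\vec{H}_s$ in (\ref{eq:temporal representation matrices}) evolves nontrivially across $s$ and that $f_s^i$ is conditioned on the \emph{current} $\vec{H}_i^s$; therefore $\vec{A}_s$ varies with $s$ even at fixed type assignment $\phi$, so no single time-independent $\vec{B}$ can substitute for $\sum_s a_s \vec{A}_s$. Combining the two directions shows that $\vec{H}_\text{nei}$ is irreducibly joint in $(a_s, \vec{A}_s)$.

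The main obstacle will be to formalize \emph{separability} tightly enough to exclude all reasonable decompositions, not merely the two canonical ones above; in particular, one must rule out hybrid low-rank factorizations of the form $\vec{H}_\text{nei} = \sum_k \vec{u}_k(\text{type})\,\vec{v}_k(\text{time})^\top$. I expect to address this by constructing a minimal witness: a two-sensor, two-type, two-time-step instance in which the GRU states and FiLM modulations are specified concretely, and $\vec{H}_\text{nei}$ is computed in closed form. I would then show that the resulting entries contain irreducible cross-terms of the form $a_s\cdot \sigma(\alpha^{(i,s)})\cdot \sigma(\beta^{(j,s)})$ that cannot be matched by any finite sum of type-only times time-only factors, because the product of two nonlinearly modulated quantities at joint index $(i,s)$ is provably non-factorizable over the index pair. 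Reducing the claim to the existence of such a witness keeps the argument finite, avoids enumerating all conceivable decompositions, and leans only on the construction specified in (\ref{eq:tau})--(\ref{eq:learnable transform}).
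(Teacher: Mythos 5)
Your first two steps---formalizing time-only and type-only decompositions and refuting each single-term form by exhibiting the dependence of $\vec{A}_s$ on both the sensor index (through the FiLM modulation $\theta_f^{(i,s)}$) and the time index (through the evolving GRU state $\vec{H}_i^s$)---track the paper's own argument, which makes exactly this observation, only less formally: the paper simply notes that $a_s$ varies across time, $\vec{A}_s$ varies across sensors, and concludes that no factorization of the form $g(a_s)\cdot h(\vec{H}_s)$ exists. Your tightened version of that step is an improvement. The divergence, and the problem, is in your third step, where you try to exclude \emph{all} hybrid low-rank factorizations $\sum_k \vec{u}_k(\text{type})\,\vec{v}_k(\text{time})^\top$ via a minimal witness. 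That claim is false as stated: over the finite index set of $N$ sensors and $L$ time steps, \emph{every} array indexed by the pair $(i,s)$ admits a decomposition into at most $\min(N,L)$ rank-one separable terms, so "cannot be matched by any finite sum of type-only times time-only factors" cannot be proved by any witness. In your proposed $2\times 2$ witness the relevant array always decomposes into at most two rank-one terms; the most you can establish is that rank one does not suffice, i.e., that no \emph{single} separable product reproduces the cross-terms $a_s\cdot\sigma(\alpha^{(i,s)})\cdot\sigma(\beta^{(j,s)})$. That weaker statement is in fact all the theorem can coherently assert, and it is all the paper claims.

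Concretely: either restrict your separability definition to single-term (rank-one) factorizations from the outset---in which case your witness construction works and is a genuinely more rigorous route than the paper's qualitative proof---or, if you insist on ruling out bounded-rank hybrids, you must fix the rank bound in the theorem statement and show the witness array exceeds it, which the current plan does not do. A secondary, smaller issue: your non-factorizability argument should not lean on the nonlinearity of $\sigma(\cdot)$ per se (separability is a statement about the index structure of the resulting numbers, not about how they were computed); what matters is only that the computed $2\times 2$ array of values has rank two, which you can verify numerically for a concrete choice of GRU states and FiLM parameters.
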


\begin{proof}
The weight $a_s$ encodes temporal influence as a scalar function of time difference $L-s$, while $\vec{A}_s$ is computed from the pairwise similarity of FiLM-modulated embeddings, i.e., $\vec{A}_s = \text{Sim}(f_s(\vec{H}_s))$, where $f_s(\cdot)$ incorporates sensor-type-aware modulation. Thus, each summand $a_s \vec{A}_s \vec{H}_s$ integrates both temporal distance and heterogeneous semantic structure. Since $\vec{A}_s$ varies across sensor types and $a_s$ across time, no factorization exists such that $\vec{H}_\text{nei}$ is expressed as $g(a_s) \cdot h(\vec{H}_s)$ without loss of generality. 
\end{proof}

% \begin{corollary}
% 
% The temporal aggregation in THGNN emphasizes not only temporally proximate messages but also semantically coherent sensor relationships modulated by heterogeneous types. This heterogeneity-aware temporal encoding enhances the expressivity of the model in real-world RUL prediction scenarios.
% \end{corollary}
\begin{remark}
The temporal aggregation in THGNN emphasizes not only temporally proximate messages but also semantically coherent sensor relationships modulated by heterogeneous types. This heterogeneity-aware temporal encoding enhances the expressivity of the model in real-world RUL prediction scenarios.
\end{remark}

\section{Experiments}

In this section, we conduct comprehensive evaluation on two benchmark datasets for RUL, and demonstrate the superior performance of our proposed approach \model\ through a comparative analysis with state-of-the-art baselines, a model ablation study, a hyperparameter sensitivity analysis, and a model complexity analysis.

\subsection{Datasets}

%To assess our model's effectiveness, we conducted tests using two renowned datasets for RUL prediction:

We experiment with two datasets, namely, C-MAPSS and N-CMAPSS, which track the life cycle of aircraft engines. 

\stitle{C-MAPSS.} This dataset details the degradation of aircraft engines, monitored through 21 sensors measuring various parameters like temperature, pressure, and fan speed. Following earlier studies \cite{chen2020machine, xu2021kdnet}, we exclude sensors with indices 1, 5, 6, 10, 16, 18, and 19 due to their constant values. C-MAPSS encompasses four subsets, each under different operating conditions and fault modes. The training and testing splits of each subset are summarized in Table~\ref{tab:data cmapss}.

\stitle{N-CMAPSS.} This dataset captures the complete life cycle, including climb, cruise, and descent phases of turbofan engines, offering a more comprehensive suite of data than C-MAPSS that involves only standard cruise conditions. N-CMAPSS enhances the degradation modeling fidelity, reflecting more complex real-world factors. We use the DS02 subset from N-CMAPSS, comprising data from 20 channels across nine units. For our training, we include six units (units 2, 5, 10, 16, 18, and 20), while the testing sets comprised units 11, 14, and 15, as specified by Chao et al.~\cite{arias2021aircraft}. The predictive performance is separately tested on each of the three testing units, and then evaluated on the aggregated testing set (denoted as WT) comprising all three units, as shown in Table~\ref{tab:data n-cmapss}.

For both datasets, we follow previous work \cite{wang2023local,chen2020machine, heimes2008recurrent, jayasinghe2019temporal, behera2021generative, zhang2016multiobjective,li2018deep,mo2022multi} for sample extraction, labeling and normalization. 
Note that, given the large sample count in N-CMAPSS post-processing, we subsample the dataset with five subsets of training and testing splits, and conduct experiments on each subset \cite{wang2023local}. The mean performance over these five subsets are reported. Conversely, since C-MAPSS has fewer samples, we directly utilize the preprocessed datasets for repeated experiments, averaging the performance across five runs.

\begin{table}[t]
\centering
\caption{Summary statistics of C-MAPSS.}
\begin{tabular}{lcccc}
\hline
                     & FD001  & FD002  & FD003  & FD004  \\ \hline
\# Training samples  & 13785  & 30736  & 15536  & 33263  \\
\# Testing samples   & 100    & 259    & 100    & 248    \\
\# Sensors           & 14     & 14     & 14     & 14     \\ \hline
\vspace{-2mm}
\label{tab:data cmapss}
\end{tabular}
\end{table}

\begin{table}[t]
\centering
\caption{Summary statistics of N-CMAPSS.}
\begin{tabular}{lcccc}
\hline
                     & Unit 11  & Unit 14  & Unit 15  & WT  \\ \hline
\# Training samples  & 52608  & 52608  & 52608  & 52608  \\
\# Testing samples   & 6630    & 1563    & 4330    & 12523    \\
\# Sensors           & 20     & 20     & 20     & 20     \\ \hline

\vspace{-2mm}
\label{tab:data n-cmapss}
\end{tabular}
\end{table}

\subsection{Experimental Settings}

The C-MAPSS and N-CMAPSS datasets were preprocessed identically to the referenced work \cite{wang2023local}, using the same train-test splits and normalization methods.
In the training phase, we utilize a batch size of 50, and apply the Adam optimizer with learning rates of 0.0005 for the C-MAPSS dataset and 0.0001 for the N-CMAPSS dataset. 
The number of layers is set to one in both the recurrent unit GRU and message-passing GNN. 
We set the training epochs to 50 for both datasets, incorporating an early stopping mechanism with a patience level of 10. Additionally, we need to deal with two critical hyperparameters, 
namely, the hidden neurons of the MLP used by the global mapping function $f(\cdot)$ and the time window size $L$, through a detailed sensitivity analysis to choose their optimal values.

Our approach THGNN
% \footnote{\textcolor{blue}{Anonymous code link at \url{https://github.com/666666abc/THGNN/tree/main}}} 
has been implemented using PyTorch 1.9, and all experiments have been conducted on a NVIDIA GeForce RTX 3090 GPU. For evaluation, we employ two metrics, including Root Mean Square Error (RMSE) and the Score function \cite{xu2021kdnet}, as follows.

\begin{align}
\text{RMSE} &= \textstyle\sqrt{\frac{1}{K}{\sum_{i=1}^{K} (\hat{y}_i - y_i)^2}},\\
\text{Score}_i &= \begin{cases} 
\exp(-\frac{\hat{y}_i - y_i}{13}) - 1, & \text{if } \hat{y}_i < y_i \\
\exp(\frac{\hat{y}_i-y_i}{10}) - 1, & \text{else}
\end{cases},
\nonumber\\
\text{Score} &= \textstyle \sum_{i=1}^{K} \text{Score}_i,
\end{align}

\subsection{Comparative Analysis}

In this section, we present a quantitative comparison against state-of-the-art baselines, followed by a visual comparison to the ground truth.

\stitle{Quantitative Comparison.}
our method is benchmarked against state-of-the-art models in RUL prediction. Previous studies \cite{sateesh2016deep,chen2020machine, miao2019joint} have highlighted the advantages of deep learning-based models over traditional shallow learning approaches. Hence, our focus is on deep learning models, which include both conventional temporal encoder-based techniques and graph-based approaches.

From conventional deep learning methods, we consider Hybrid \cite{chen2020machine}, BSLTM \cite{huang2019bidirectional}, TCMN \cite{jayasinghe2019temporal}, KDnet \cite{xu2021kdnet}, SRCB \cite{hong2020multivariate}, 2D-CNN \cite{zhao2020double}, and 3D-CNN \cite{wang2021deep}, with SRCB, 2D-CNN, and 3D-CNN integrating sensor correlations into their frameworks. Additionally, transformer-based methods like Transformer \cite{mo2021remaining}, Informer \cite{zhou2021informer}, Autoformer \cite{wu2021autoformer}, and Crossformer \cite{zhang2022crossformer} are also compared. 
The combination of Informer and RevIN \cite{kim2022reversible}, a simple yet effective normalization-and-denormalization method for timeseries, is also considered. 
We also benchmark against graph-based methods, including as DAG \cite{li2019directed}, STGCN \cite{wang2021spatio}, HAGCN \cite{li2021hierarchical}, MAGCN \cite{chen2023multi}, LOGO \cite{wang2023local}, and TKGIN \cite{zhang2023temporal}. 
Finally, we have also devised a baseline model, named THAN, which incorporates temporal message passing with a widely used heterogeneous graph neural network called HAN \cite{wang2019heterogeneous}. Except THAN, the results of the baselines in Tables~\ref{table:performance c-mapss} and \ref{table:performance n-cmapss} are reproduced from a prior study \cite{wang2023local}, since we adopt the exact same data preprocessing steps.

The results on the C-MAPSS and N-CMAPSS datasets are reported in 
Table~\ref{table:performance c-mapss} and \ref{table:performance n-cmapss}, respectively. 
  Notably, the C-MAPSS dataset is characterized by limited scope and data coverage, smaller dataset size, and simpler degradation patterns. Despite these limitations, \model\ achieves competitive performance in terms of RMSE and Score, coming close to the best-performing baseline.

 Meanwhile, on the N-CMAPSS dataset, THGNN consistently achieves the best performance across the three testing units as well as their aggregation WT on both metrics.  
 Unlike C-MAPSS, N-CMAPSS incorporates \textbf{more complexities that better mimic real-world engine conditions}, as noted by Chao et al.~\cite{arias2021aircraft}, thereby presenting a more challenging dataset for analysis. 
 In particular, it captures more comprehensive phases including climb, cruise, and descent phases, whereas C-MAPSS involves only the cruise phase.
 This complexity is further manifested through more intricate temporal and heterogeneous patterns with a greater variety of sensor types.
 Across the three subsets, our method \model\ shows a significant enhancement, improving the predictive performance by 5.7\%--19.2\% in RMSE and 19.8\%--31.6\% in Score, compared to the best baseline method in each case. The \textbf{superior} RUL predictions of our model on the \textbf{more complex} N-CMAPSS engine data imply the \textbf{robustness} of \model\ in potential \textbf{practical} deployment settings.

It is also worth noting that, among the baselines, graph-based methods tend to perform better due to the capture of important spatial dependencies across sensors. 
Transformer-based approaches such as Informer and Crossformer trail closely behind due to their ability to capture long-range dependencies by the Transfomer architecture, while conventional temporal encoding approaches tend to produce less favorable results.
We notice that some baselines, like LOGO, achieve better results than \model\ on certain datasets, often corresponding to simpler scenarios where its coarse-grained approach is sufficient. However, \model\ consistently outperforms LOGO on more complex datasets, such as N-CMAPSS, where its specialized components effectively handle sensor heterogeneity and nuanced temporal dependencies.

\stitle{Visual Comparison.}
% Moreover, we visualize the predicted RUL of our \model\ and the ground-truth RUL on both C-MAPSS and N-CMAPSS datasets, plotted in Figures~\ref{fig:vis our cmapss} and \ref{fig:vis our n-cmapss}, respectively. 
Following prior works \cite{wang2023local, wang2021spatio}, we visualize the predicted RUL of our \model\ and the ground-truth RUL on both C-MAPSS and N-CMAPSS datasets, plotted in Figures~\ref{fig:vis our cmapss} and \ref{fig:vis our n-cmapss}, respectively.
The visual comparison focuses solely on the gap between the proposed method’s predictions and the ground truth, rather than the gap with other baselines. This is because the primary goal of these visualizations is to illustrate how closely the proposed method aligns with the ground truth. The comparative performance of our method against baseline methods is already quantitatively demonstrated in the main experimental results, where metrics like RMSE and Score provide a clear and objective comparison.
Given the large number of samples in the N-CMAPSS testing sets, we selectively visualize 150 random samples from each testing set for clarity. The visual comparisons across all subsets of the two datasets reveal a close alignment between the predicted RUL values and the actual RUL values. This consistency across diverse testing scenarios underscores the robustness and accuracy of our method \model\ in predicting RUL, demonstrating its practical applicability and effectiveness in real-world settings.

\begin{table*}[tbp]
\scriptsize
% \footnotesize
\centering
\caption{Performance comparison to baseline methods on C-MAPSS (bold: best; underline: runner-up).}
\addtolength{\tabcolsep}{-5pt}
\begin{tabular}{lcccccccccccccccccccc}
\toprule
\multirow{2}{*}{\textbf{Models}} & \multicolumn{4}{c}{\textbf{FD001}} & \multicolumn{4}{c}{\textbf{FD002}} & \multicolumn{4}{c}{\textbf{FD003}} & \multicolumn{4}{c}{\textbf{FD004}} & \multicolumn{4}{c}{\textbf{Average}} \\
\cmidrule(lr){2-5} \cmidrule(lr){6-9} \cmidrule(lr){10-13} \cmidrule(lr){14-17} \cmidrule(lr){18-21}
                    & \textbf{RMSE} &Rank & \textbf{Score} &Rank & \textbf{RMSE} & Rank& \textbf{Score} &Rank &  \textbf{RMSE} & Rank & \textbf{Score} & Rank & \textbf{RMSE} &Rank & \textbf{Score} &Rank & \textbf{RMSE} &Rank & \textbf{Score} &Rank \\ \midrule
Hybrid      & 13.37 & 10 & 404 & 19 & 14.64 & 14 & 1093 & 16 & 13.91 & 18 & 467 & 18 & 14.82 & 10 & 1038 & 8 & 14.19 & 13 & 751 & 14 \\
BLSTM       & 13.48 & 11 & 335 & 14 & 14.24 & 9 & 917 & 11 & 12.62 & 8 & 307 & 10 & 14.90 & 11 & 1037 & 7 & 13.81 & 8 & 649 & 9 \\
TCMN        & 13.76 & 16 & 287 & 9 & 14.33 & 10 & 878 & 7 & 13.25 & 15 & 370 & 15 & 15.15 & 13 & 981 & 5 & 14.12 & 12 & 629 & 6 \\
KDnet       & 13.54 & 12 & 378 & 17 & 14.04 & 7 & 912 & 10 & 13.01 & 13 & 347 & 14 & 15.53 & 14 & 1206 & 14 & 14.03 & 11 & 711 & 13 \\
SRCB        & 13.36 & 8 & 301 & 10 & 14.03 & 6 & 911 & 9 & 12.34 & 5 & 322 & 11 & 15.13 & 12 & 1076 & 11 & 13.72 & 7 & 653 & 10 \\
2D-CNN      & 15.04 & 19 & 367 & 16 & 15.85 & 17 & 1290 & 17 & 13.79 & 17 & 333 & 12 & 18.31 & 17 & 2520 & 18 & 15.75 & 18 & 1128 & 18 \\
3D-CNN      & 14.90 & 18 & 374 & 18 & 15.64 & 16 & 976 & 13 & 13.21 & 14 & 297 & 9 & 16.75 & 15 & 2185 & 16 & 15.13 & 16 & 958 & 17 \\
Transformer & 12.90 & 5 & 316 & 13 & 19.18 & 19 & 4101 & 20 & 13.27 & 16 & 340 & 13 & 22.18 & 19 & 4965 & 19 & 16.88 & 19 & 2431 & 19 \\
Informer    & 13.65 & 13 & 285 & 7 & 14.58 & 13 & 1040 & 15 & 12.99 & 11 & 260 & 5 & 14.66 & 7 & 1071 & 10 & 13.97 & 10 & 664 & 11 \\
Informer+RevIN & 12.97 & 6 & 271 & 5 & 13.85 & 4 & 988 & 14 & 12.34 & 4 & \textbf{247} & 1 & \underline{13.93} & 2 & 1017 & 6 & 13.27 & 3 & 631 & 7 \\
Autoformer  & 28.14 & 20 & 1938 & 20 & 22.00 & 20 & 2260 & 19 & 32.20 & 20 & 2499 & 20 & 38.45 & 20 & 12100 & 20 & 30.20 & 20 & 4699 & 20 \\
Crossformer & \textbf{12.11} & 1 & \textbf{216} & 1 & 14.16 & 8 & 837 & 5 & 12.32 & 3 & 260 & 5 & 14.81 & 9 & 956 & 3 & 13.35 & 4 & 567 & 3 \\\midrule
DAG         & 12.56 & 4 & 249 & 4 & 16.58 & 18 & 1490 & 18 & 12.79 & 9 & 589 & 19 & 19.36 & 18 & 2200 & 17 & 15.32 & 17 & 1132 & 16 \\
STGCN       & 14.55 & 17 & 402 & 17 & 14.58 & 12 & 943 & 12 & 13.06 & 12 & 394 & 16 & 14.60 & 5 & 1065 & 9 & 14.20 & 14 & 701 & 12 \\
HAGCN       & 13.36 & 8 & 283 & 8 & 14.73 & 15 & 844 & 6 & \textbf{12.06} & 1 & 271 & 8 & 14.55 & 4 & 971 & 4 & 13.68 & 6 & 592 & 4 \\
MAGCN       & 13.73 & 15 & 311 & 12 & 14.38 & 11 & 897 & 8 & 12.94 & 10 & \underline{258} & 2 & 14.75 & 12 & 1079 & 12 & 13.95 & 9 & 636 & 8 \\
LOGO        & \underline{12.13} & 2 & \underline{226} & 2 & \textbf{13.54} & 1 & 832 & 4 & \underline{12.18} & 2 & 261 & 7 & 14.29 & 3 & \underline{944} & 2 & \textbf{13.04} & 1 & \underline{566} & 2 \\
TKGIN & 12.69 & 5 & 269 & 6 & 13.99 & 5 & \underline{802} & 2 & 12.46 & 6 & \textbf{257} & 3 & \textbf{13.82} & 1 & \textbf{922} & 1 & \underline{13.24} & 2 & \textbf{563} & 1 \\
THAN        & 13.69 & 14 & 317 & 15 & \underline{13.71} & 2 & \textbf{801} & 1 & 14.18 & 19 & 402 & 17 & 17.02 & 16 & 1674 & 15 & 14.65 & 15 & 799 & 15 \\

\midrule
THGNN (Ours) & 13.15$\pm$0.53 & 7 & 285$\pm$40 & 7 & 13.84$\pm$0.65 & 3 & 806$\pm$105 & 3 & 12.61$\pm$0.42 & 7 & \underline{255}$\pm$35 & 2 & 14.65$\pm$0.73 & 6 & 1166$\pm$201 & 13 & 13.56 & 5 & 628 & 5 \\
\bottomrule
\end{tabular}

\label{table:performance c-mapss}
\end{table*}

\begin{table*}[tbp]
\centering
\scriptsize
\captionsetup{justification=centering}
\caption{Performance comparison to baseline methods on N-CMAPSS (bold: best; underline: runner-up). \\
WT is the integration of the three sub-datasets (Unit 11, Unit 14, and Unit 15).}
\addtolength{\tabcolsep}{-3pt}
\begin{tabular}{lcccccccccccccccc}
\toprule
\multirow{2}{*}{\textbf{Models}} & \multicolumn{4}{c}{\textbf{Unit 11}} & \multicolumn{4}{c}{\textbf{Unit 14}} & \multicolumn{4}{c}{\textbf{Unit 15}} & \multicolumn{4}{c}{\textbf{WT (overall)}}  \\
\cmidrule(lr){2-5} \cmidrule(lr){6-9} \cmidrule(lr){10-13} \cmidrule(lr){14-17}
                & \textbf{RMSE} &Rank & \textbf{Score} &Rank & \textbf{RMSE} &Rank & \textbf{Score} &Rank & \textbf{RMSE} &Rank & \textbf{Score} &Rank & \textbf{RMSE} &Rank &\textbf{Score} &Rank   \\ \midrule
Hybrid     & 7.86 &13& 18531&16          & 8.11&14& 3241&16           & 5.79&14& 4456&13         & 8.06&15& 37028&18   \\
BLSTM      & 8.76&16& 22805&18          & 8.51&15& 2833&14           & 5.41&12& 3901&11           & 8.26&16& 32421&16   \\
TCMN       & 6.03&5& 8442&6           & 7.35&10& 2346&11           & 5.32&10& 3515&9               & 6.65&8& 15407&4    \\
KDnet      & 8.48&15& 9463&11           & 7.93&13 & 1930&4           & 5.76&13& 3977&12           & 8.34&17& 16461&6           \\
SRCB       & 6.64&11 & 8613&8           & 7.34&9 & 2118&9           & 4.93&5 & 3207&5           & 6.70&10 & 20361&14       \\
2D-CNN     & 6.38&9 & 8531&7           & 7.20&7 & 2051&8           & 5.20&8 & 3468&7           & 6.99&13& 17100&12           \\
3D-CNN     & 6.21&7 & 8963&9           & 7.12&6 & 1976&6           & 5.25&9 & 3363&6           & 6.76&11 & 16887&8        \\
Transformer& 5.86&3 & 7725&4           & 7.69&12 & 2397&12           & 5.34&11 & 3391&6           & 6.54&7 & 17075&11    \\
Informer   & 6.28&8 & 8019&5           & 7.67&11 & 2437&13           & 5.03&6 & 3195&4           & 6.24&5& 16066&5    \\
Informer+RevIN & 5.97 & 4 & 7618 & 3     & 7.29 & 8 & 2315 & 10 & 4.78 & 3 & 3035 & 3     & \underline{5.93} & 2 & 15263 & 3 \\
Autoformer & 10.73&19 & 23987&19          & 13.72&20 & 9533&19           & 11.29&20 & 15383&18          & 11.48&20 & 50322&20          \\
Crossformer& 6.89&12 & 11816&13          & 8.58&16 & 2855&15           & 7.39&17  & 5733&14           & 6.87&12 & 16704&7        \\\midrule
DAG        & 9.00&18 & 21282&17          & 9.37&17  & 4211&17           & 7.30&16 & 7970&15           & 8.82&18 & 36780&17        \\
STGCN      & 8.96&17 & 32080&20          & 11.96&19 & 17665&20          & 8.39&19  & 20169&19          & 9.31&19 & 37710&19         \\
HAGCN      & 6.39&10 & 9956&12           & 7.03&5 & 2009&7           & 7.48&18 & 9376&17           & 6.67&9 & 17918&13         \\
MAGCN      & 8.33&14 & 13835&15          & 10.52&18 & 4566&18           & 5.11&7  & 3623&10           & 7.37&14 & 20821&15           \\
LOGO       &\underline{5.73}&2&\underline{7509}&2   &6.72&4&1940&5   &\underline{4.54}&2&\underline{3017}&2   &6.07&3&\underline{15127}&2   \\
TKGIN & 6.07 & 6 & 9458 & 10 & 6.68 & 3 & 1909 & 3 & 7.11 & 15 & 8907 & 16 & 6.34 & 6 & 17022 & 9 \\
THAN        &11.60&20&13272&14       &\underline{6.29}&2&\underline{1691}&2        &4.84&4&3477&8        &6.21&4&17059&10     \\

\midrule
THGNN (Ours)     &\textbf{4.63}$\pm$0.37&1 &\textbf{5138}$\pm$750&1   &\textbf{5.64}$\pm$0.49&1 &\textbf{1356}$\pm$157&1    &\textbf{4.06}$\pm$0.26&1 &\textbf{2402}$\pm$351&1    &\textbf{5.59}$\pm$0.38&1 &\textbf{11183}$\pm$1570&1        \\
\bottomrule
\vspace{-2mm}
\end{tabular}
\label{table:performance n-cmapss}
\end{table*}

\begin{figure*}
   \subfigure[FD001]{
   \centering
   \includegraphics[width=0.28\linewidth]{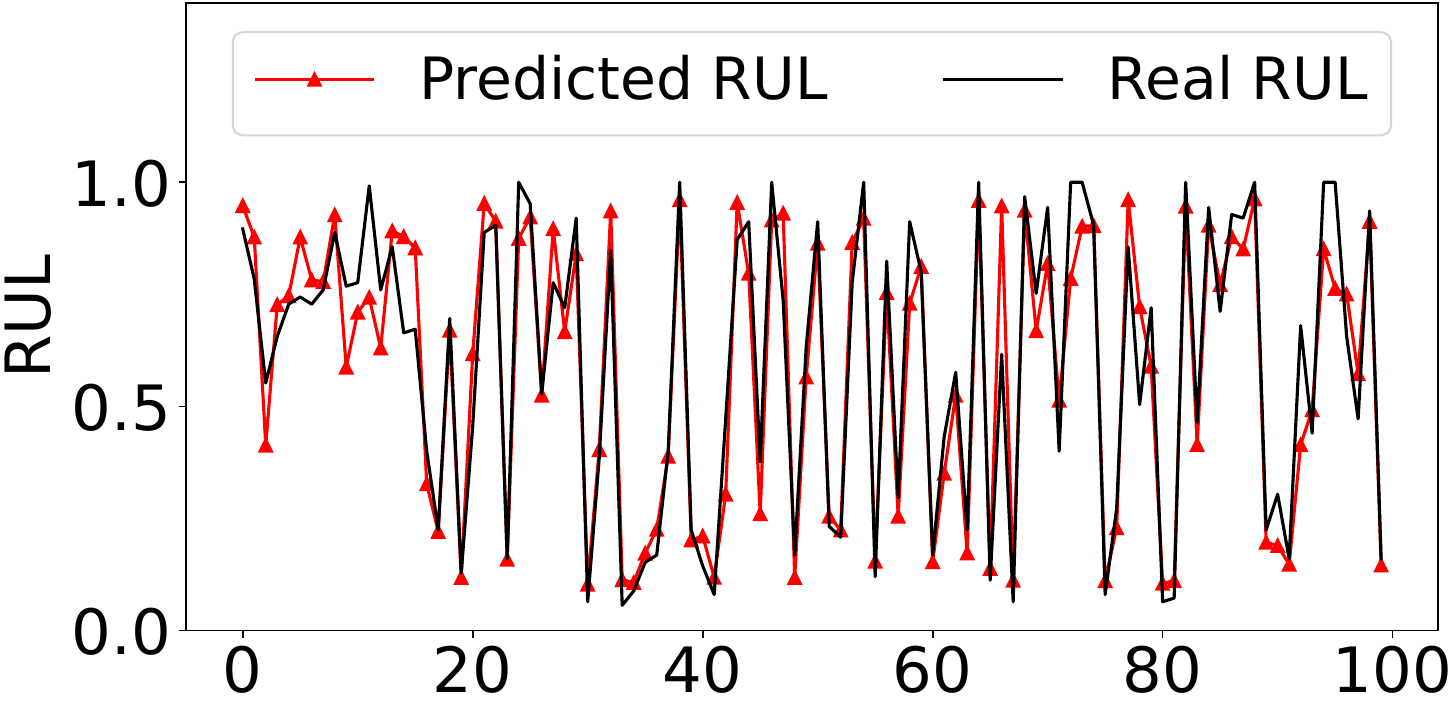}
   % \vspace{-2mm}
   }%
   \subfigure[FD002]{
   \centering
   \includegraphics[width=0.71\linewidth]{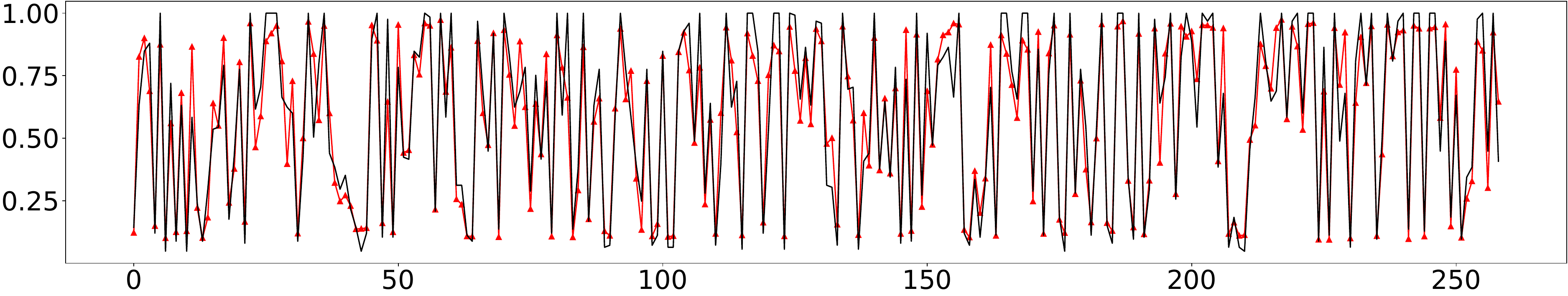}}
   % \vspace{-2mm}
   \subfigure[FD003]{
   \centering
   % \hspace{0.5mm}%
   \includegraphics[width=0.28\linewidth]{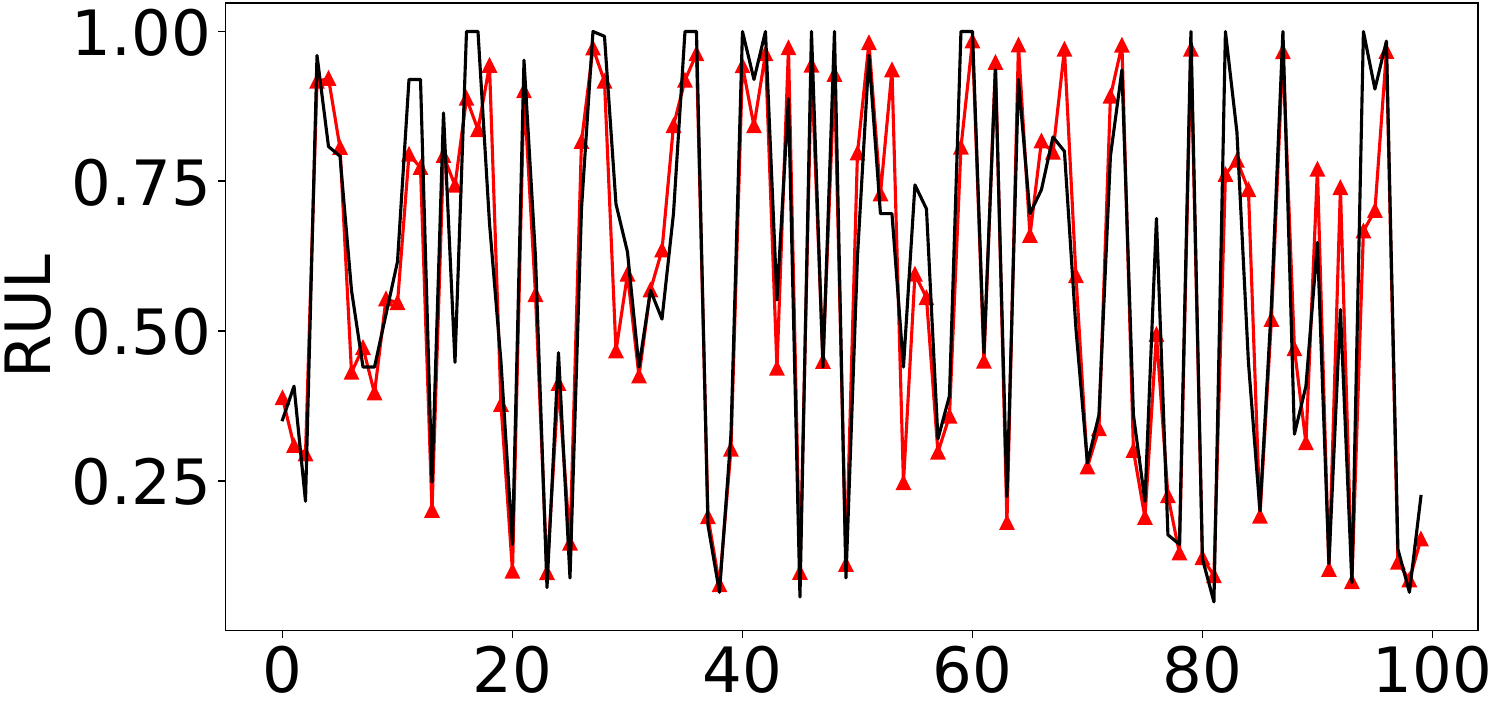}}%
   \subfigure[FD004]{
   \centering
   \hspace{1mm}%
   \includegraphics[width=0.71\linewidth]{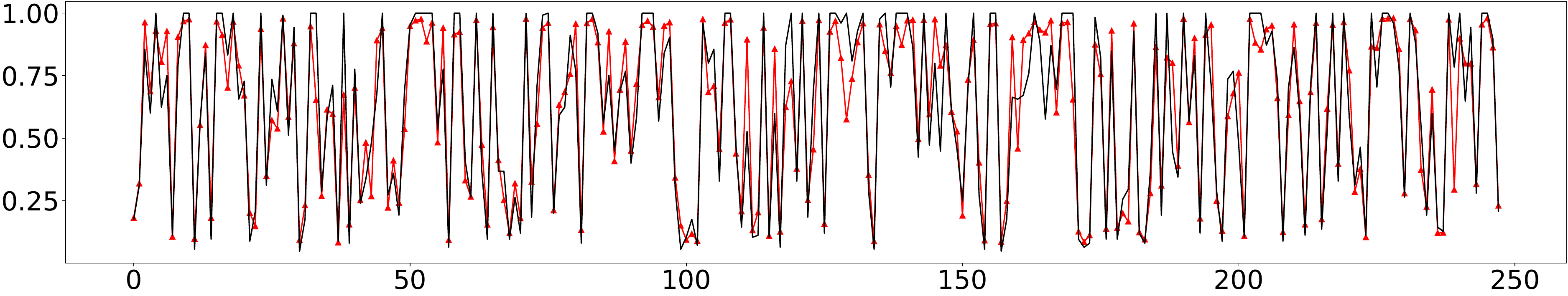}
   }
	\caption{Visual comparison of predicted and actual RUL values across the four subsets of C-MAPSS.}
	\vspace{-2mm}
	\label{fig:vis our cmapss}
\end{figure*}

\begin{figure*}
   \subfigure[Unit 11]{
   \centering
   \includegraphics[width=0.48\linewidth]{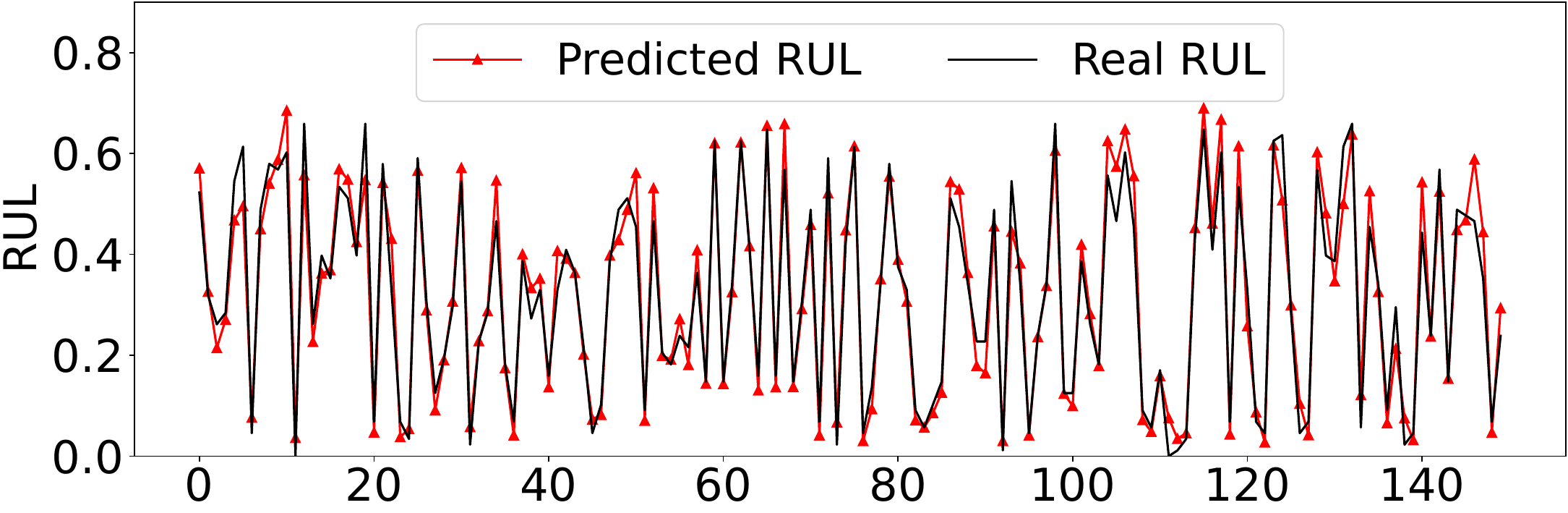}
   % \vspace{-2mm}
   }%
   \subfigure[Unit 14]{
   \centering
   \includegraphics[width=0.48\linewidth]{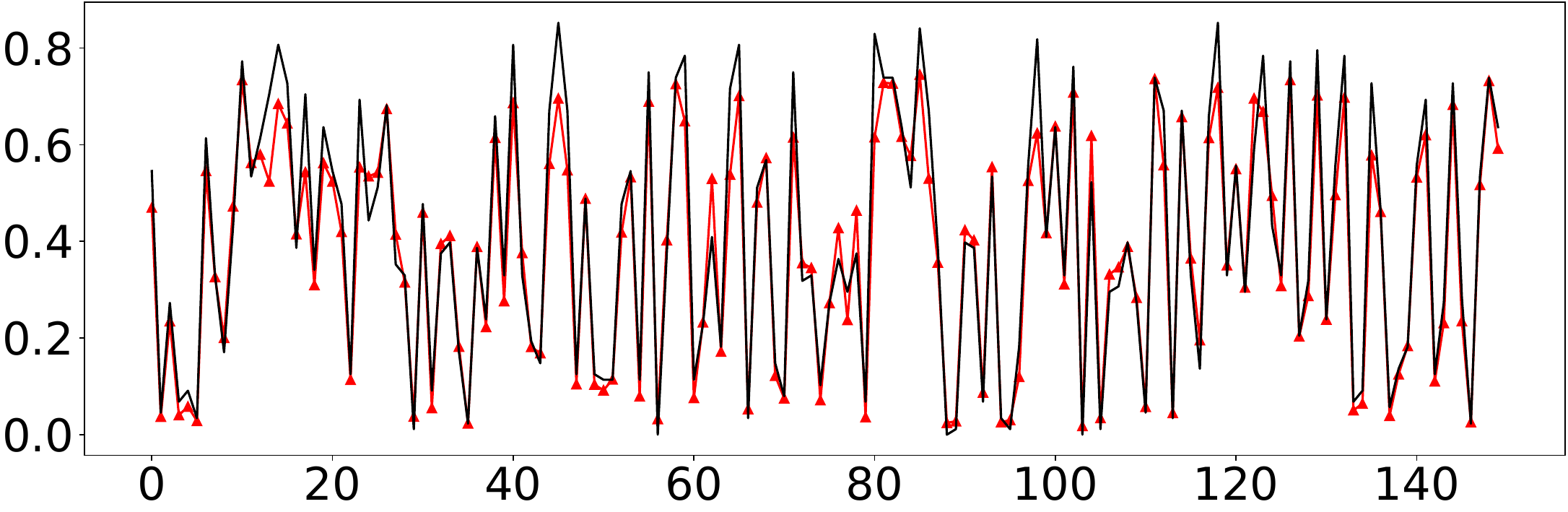}}
   % \vspace{-2mm}
   \subfigure[Unit 15]{
   \centering
   % \hspace{0.5mm}%
   \includegraphics[width=0.48\linewidth]{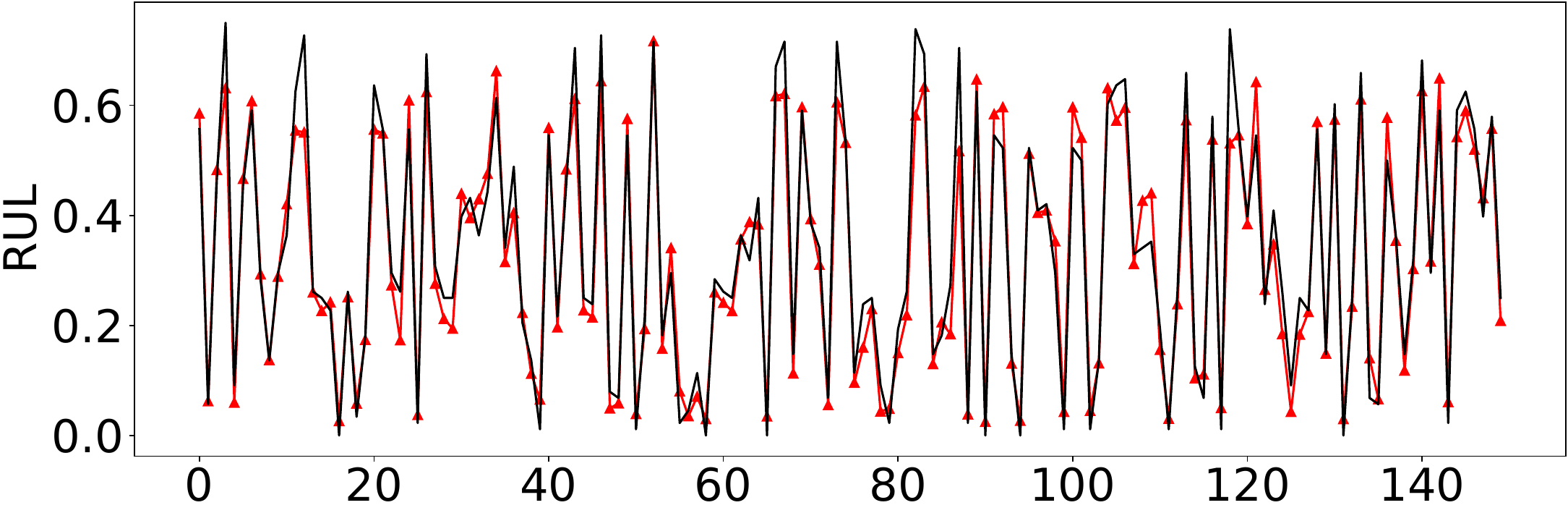}}%
   \subfigure[WT]{
   \centering
   \hspace{1mm}%
   \includegraphics[width=0.48\linewidth]{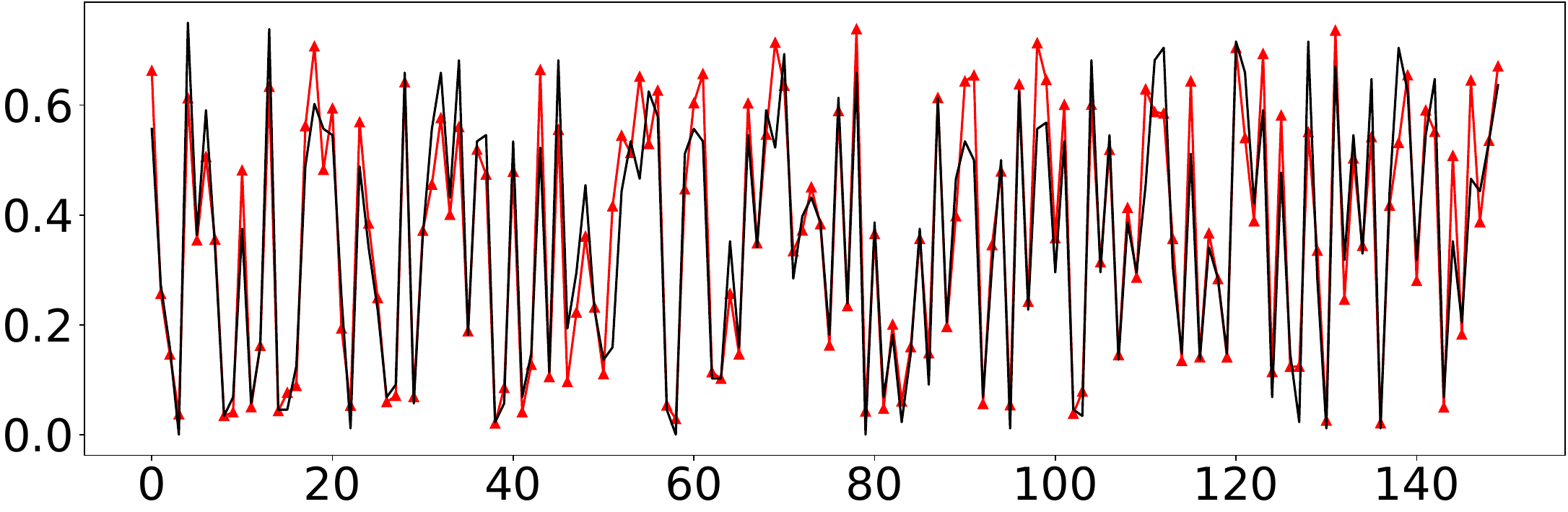}
   }
	\caption{Visual comparison of predicted and actual RUL values across the four subsets of N-CMAPSS.}
	% \vspace{-2mm}
	\label{fig:vis our n-cmapss}
\end{figure*}

\begin{table*}[htb]
\centering
\addtolength{\tabcolsep}{0pt}
\caption{Ablation analysis on C-MAPSS.}
\begin{tabular}{lcccccccccc}
\toprule
\multirow{2}{*}{\textbf{Variants}} & \multicolumn{2}{c}{FD001} & \multicolumn{2}{c}{FD002} & \multicolumn{2}{c}{FD003} & \multicolumn{2}{c}{FD004} & \multicolumn{2}{c}{Average}\\ 
\cmidrule(lr){2-3} \cmidrule(lr){4-5} \cmidrule(lr){6-7} \cmidrule(lr){8-9}\cmidrule(lr){10-11}
& RMSE & Score & RMSE & Score & RMSE & Score & RMSE & Score & RMSE & Score\\
\midrule

No Temporal Aggregation \& Heterogeneity &13.69&301&14.78&925&12.80&253&17.37&1570 &14.66&762\\
No Heterogeneity &12.93&272&14.46&832&15.93&991&17.05&1551 &15.09&912\\

No FiLM &13.69&317&13.71&801&14.18&402&17.02&1674 &14.65&799\\
No MLP as Mapping Function &13.41&304&13.81&850&12.80&300&14.69&1237 &13.68&673\\
No Hawkes &13.78&324&13.98&887 &12.98&376&15.39&1283 &14.03&718\\
\midrule
\model &13.15&285&13.84&806&12.61&255&14.65&1166 &13.56&628\\
\midrule
Snapshot (patch size = 50) &14.53&374&14.87&931 &13.12&483&16.68&1436 &14.80&806\\
Snapshot (patch size = 25)&13.87&383&14.59&917&13.15&464&15.79&1338&14.35&776\\
Snapshot (patch size = 5) &13.05&287&14.08&823&12.97&375&14.68&1206&13.71&673\\
GRU Replaced by Attention &13.99&314&14.01 &819&13.34&289&14.77&1228
&14.03&663\\
% Self-attention for temporal modeling &13.99&314&14.01 &819&13.34&289&14.77&1228
% &14.03&663\\

\bottomrule
\end{tabular}
\label{table:abla c-mapss}
\end{table*}

\begin{table*}[htb]
\centering
\addtolength{\tabcolsep}{4pt}
\caption{Ablation analysis on N-CMAPSS.}
\begin{tabular}{lcccccccc}
\toprule
\multirow{2}{*}{\textbf{Variants}} & \multicolumn{2}{c}{Unit 11} & \multicolumn{2}{c}{Unit 14} & \multicolumn{2}{c}{Unit 15} & \multicolumn{2}{c}{WT (overall)} \\ 
\cmidrule(lr){2-3} \cmidrule(lr){4-5} \cmidrule(lr){6-7} \cmidrule(lr){8-9}
& RMSE & Score & RMSE & Score & RMSE & Score & RMSE & Score \\
\midrule

No Temporal Aggregation \& Heterogeneity &6.97&9457&7.06&2159&5.25&3654&7.04&16391 \\
No Heterogeneity &6.47&10266&5.73&1520&4.21&2601&6.20&15134 \\
No FiLM &11.60&13272&6.29&1691&4.84&3477&6.21&17059 \\
No MLP as Mapping Function &4.72&5215&5.69&1393&3.91&2382&6.90&20683 \\
No Hawkes &6.34&8461&6.21&1638 &5.04&2897&6.09&14932 \\
\midrule
\model &4.63&5138&5.64&1356&4.06&2402&5.59&11183 \\
\midrule
Snapshot (patch size = 50) &6.78&9783&6.59&1784 &5.27&3053&6.25&16275 \\
Snapshot (patch size = 25) &6.39&8863&6.34&1678&5.22&2901&6.12&15389\\
Snapshot (patch size = 5) &6.13&7758&5.56&1399&4.63&2670&5.89&13940\\
GRU Replaced by Attention &5.45&5837&5.78&1492 &4.58&2739&5.83&13689 \\

\bottomrule
\end{tabular}
\label{table:abla n-cmapss}
\end{table*}

\subsection{Ablation Study}

In the ablation study, we aim to assess the effectiveness of the proposed components within our model \model. These components include (1) aggregating historical information prior to the target time $T$; (2) incorporating sensor heterogeneity in graph construction;
(3) universal sensor and time adaptation by FiLM;
(4) employing a two-layer bottleneck-shaped MLP for the global mapping function;
(5) Hawkes Process;
(6) GRU.

The variants are designed to incrementally incorporate more components, showcasing the impact of each on the overall performance, as follows.

\begin{itemize}
    \item \emph{No Temporal Aggregation \& Heterogeneity}: Utilizing only the temporal representation matrix at the target time $T$ for constructing the graph and subsequent message passing. It does not aggregate historical information prior to time $T$ and disregards the heterogeneity of different sensor types when constructing edges. Essentially, \model-T-H functions as a basic message-passing GNN, lacking temporal and heterogeneous modeling.

    \item \emph{No Heterogeneity}: Building upon \emph{No Temporal Aggregation \& Heterogeneity}, it incorporates historical information by aggregating the $L-1$ time steps before the target time $T$ as well. This addition enables the model to leverage temporal patterns, providing a richer context for graph construction and message passing.

    \item \emph{No FiLM}: Further improving over \emph{No Heterogeneity}, it considers the heterogeneity of node types during edge construction. Unlike \emph{No Heterogeneity}, \emph{No FiLM} introduces individual mapping functions for each node type as indicated in Eq.~\eqref{eq:hete functions}. That means it does not employ the \emph{FiLM} layer to achieve universal adaptation across nodes and time steps, which limits its ability to modulate node representations dynamically based on fine-grained node and temporal contexts.

    \item \emph{No MLP as mapping function}: It employs the FiLM layer to tackle the issue of node heterogeneity, further enhancing \emph{No FiLM}. However, it implements the global mapping function based on a lightweight \textbf{fully connected layer}. Although it marginally reduces the parameters and model complexity, its learning capacity is also limited.

    \item \emph{No Hawkes}: Everything else is the same as \model, but instead of using Hawkes Process, this variant \textbf{considers all previous time steps equally, regardless of their temporal distance}.
    
    \item \emph{\model}: This is our full model, which substitutes the fully connected layer in \emph{No MLP as mapping function} with a two-layer MLP as the global mapping function. 

    \item \emph{Snapshot}: Everything else is the same as \model, but instead of aggregating the historical neighbourhood information of $L$ time steps, this variant \textbf{compresses the information of $L$ time steps into a snapshot} by mean pooling. Only \textbf{one adjacency matrix} is involved in predicting the RUL at time $T$. Here the patch size equals $L$, the time steps to be compressed.
    
    \item \emph{GRU Replaced by Attention}: Everything else is the same as \model, but we do not use GRU as the temporal layer, as shown in Figure 2 (a). Instead, we use self-attention mechanism to be the temporal layer. 

\end{itemize}

We report the results of the ablation analysis on the C-MAPSS and N-CMAPSS datasets in  Tables~\ref{table:abla c-mapss} and \ref{table:abla n-cmapss}, respectively. 
In general, we observe a gradual improvement in the overall performance of the variants on both datasets as we progress from the \emph{No Temporal Aggregation \& Heterogeneity} variant to the full \model. One notable exception occurs with \emph{No FiLM}, which often produces results inferior to \emph{No Heterogeneity}. A potential reason is that \emph{No FiLM} introduces one distinct mapping function for each type of sensor, significantly increasing the number of learnable parameters and becoming prone to overfitting. 
This demonstrates that handling node heterogeneity in a na\"ive way may hurt the performance.
Overall, across the entire C-MAPSS dataset, the average performance of \model\ is superior to all ablated variants. And on the more comprehensive WT subset, which integrates all units, \model\ outperforms all ablated variants, demonstrating its robustness and ability to handle diverse temporal and heterogeneous patterns effectively. 
The ablation analysis validates the contribution of each proposed components.

\subsection{Hyperparameter Sensitivity Analysis}

\stitle{Hidden Neurons in Global Mapping.}
In our model, we employ a two-layer bottleneck-shaped MLP for the global mapping function $f(\cdot)$. As the global mapping function is a key design of our approach, the choice of the hidden dimension in this MLP is crucial. To investigate  the impact of the hidden neurons, we conduct a series of experiments in Figure~\ref{fig:hid_dim}. We observe that simply increasing the hidden dimension is not the most effective strategy on the smaller C-MAPSS dataset, as it may lead to overfitting. Instead, a smaller dimension size, such as 4, proves to be sufficient. Conversely, for the larger and more complex N-CMAPSS dataset, it is beneficial to increase the hidden dimension to a moderate size like 32 to enhance the model capacity.

\begin{figure}
   \subfigure[RMSE, C-MAPSS]{
   \centering
   \includegraphics[width=0.48\linewidth]{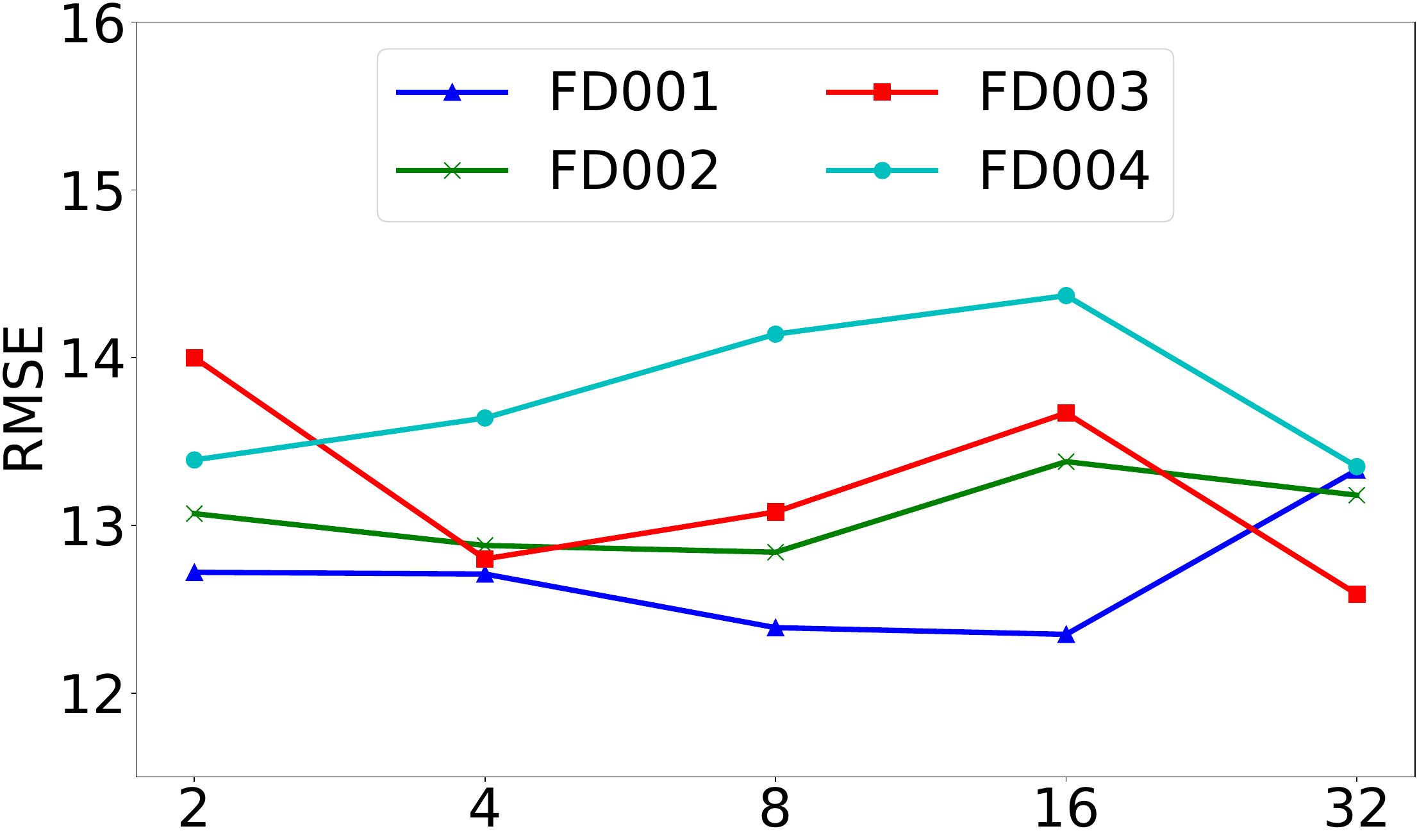}
   }%
   \subfigure[Score, C-MAPSS]{
   \centering
   \includegraphics[width=0.48\linewidth]{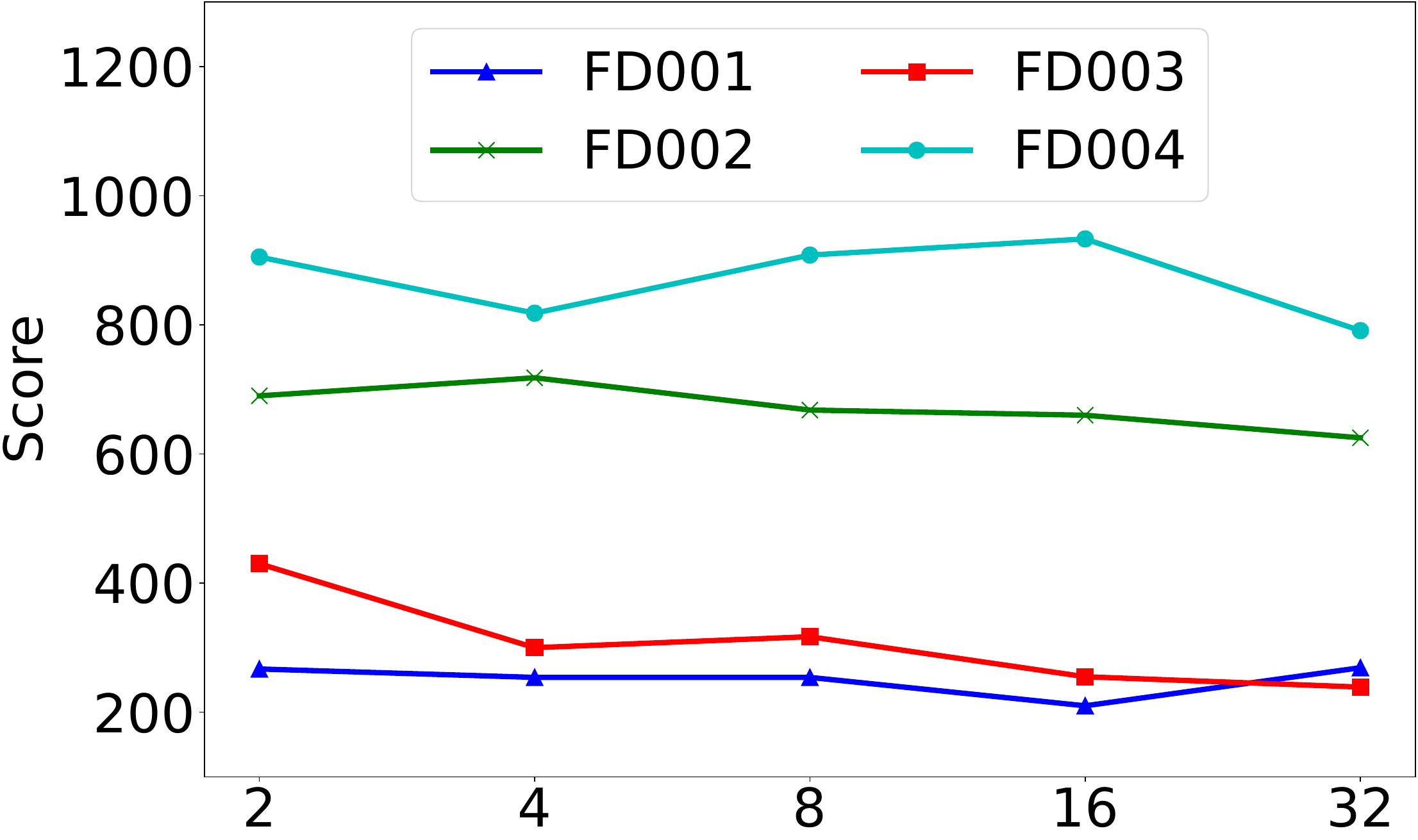}}
   % \vspace{-2mm}
   \subfigure[RMSE, N-CMAPSS]{
   \centering
   % \hspace{0.5mm}%
   \includegraphics[width=0.48\linewidth]{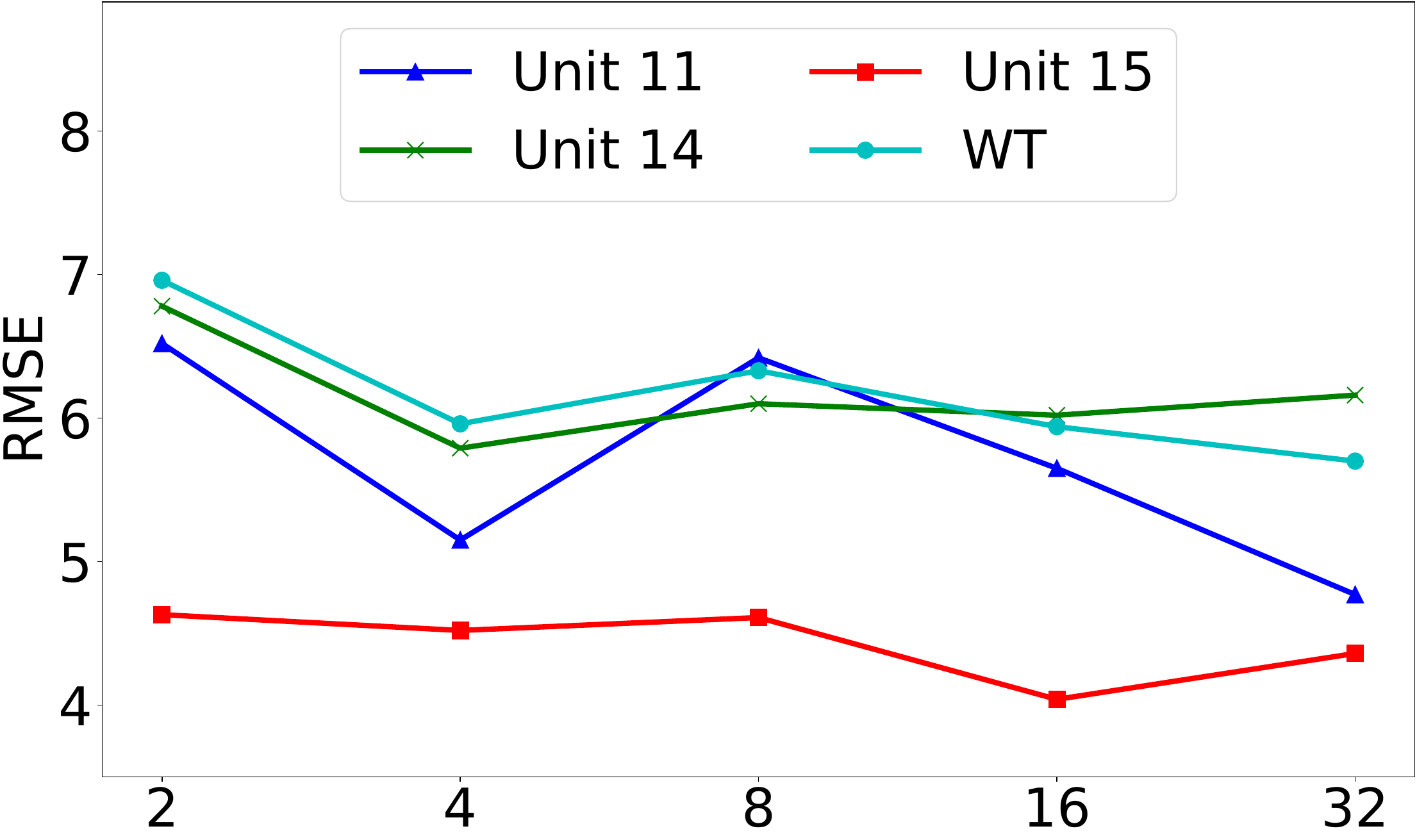}}%
   \subfigure[Score, N-CMAPSS]{
   \centering
   \hspace{1mm}%
   \includegraphics[width=0.48\linewidth]{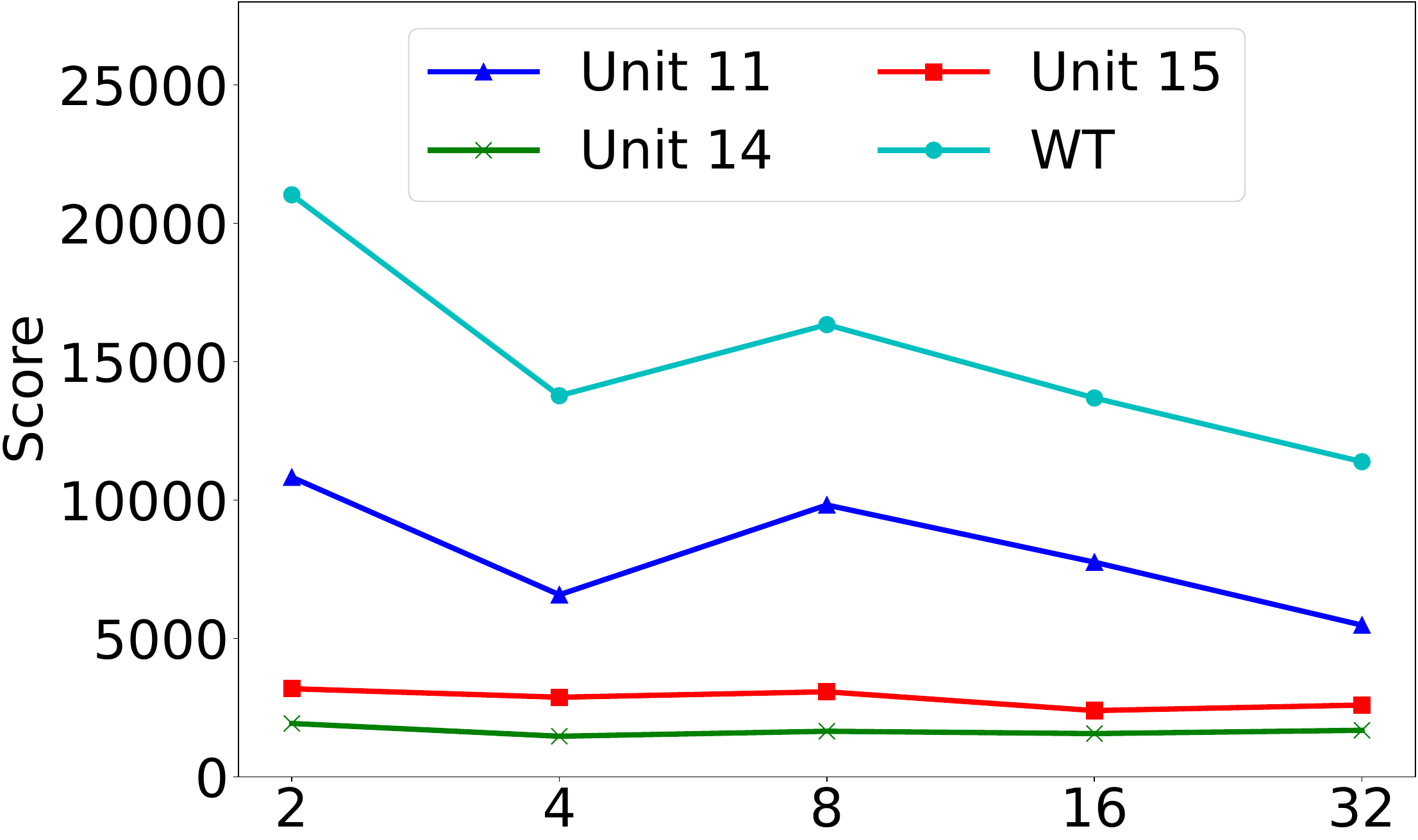}
   }
   % \vspace{-3mm}
	\caption{Hidden dimension of global mapping MLP.}
	% \vspace{-2mm}
	\label{fig:hid_dim}
\end{figure}

\stitle{Time Window Size.}
Our model aggregates information from historical neighbors, and thus the length of history, i.e., the size of the time window $L$, becomes an important consideration. To evaluate the effect of $L$ on the predictive performance, we conduct experiments that vary the value of $L$ between 10 and 90. As depicted in Figure~\ref{fig:time window}, we  observe that increasing the size of the time window $L$ naturally enlarges the pool of historical neighbors, leading to more accurate  predictions as anticipated on C-MAPSS (similar trends can also be observed on N-CMAPSS). However, a larger time window size requires more memory and computational time. Thus, a balance must be struck between model performance and computational efficiency. It is noted that beyond a certain point around $L=50$, the improvement in model performance becomes marginal. Therefore, $L=50$ represents a balanced choice that considers both effectiveness and efficiency.

\begin{figure}
   \subfigure[Impact on RMSE]{
   \centering
   \includegraphics[width=0.48\linewidth]{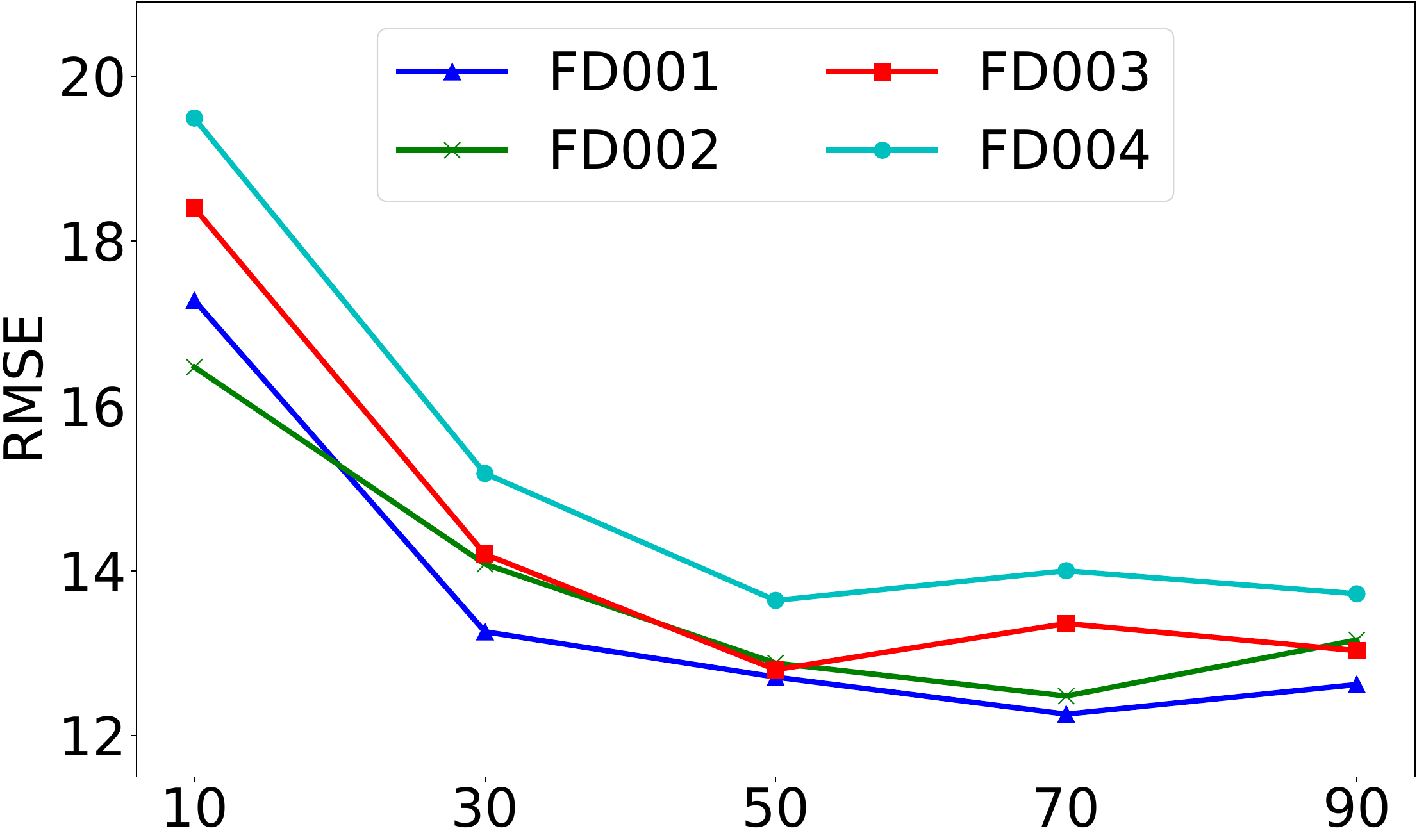}
   % \vspace{-2mm}
   }%
   \subfigure[Impact on Score]{
   \centering
   \includegraphics[width=0.48\linewidth]{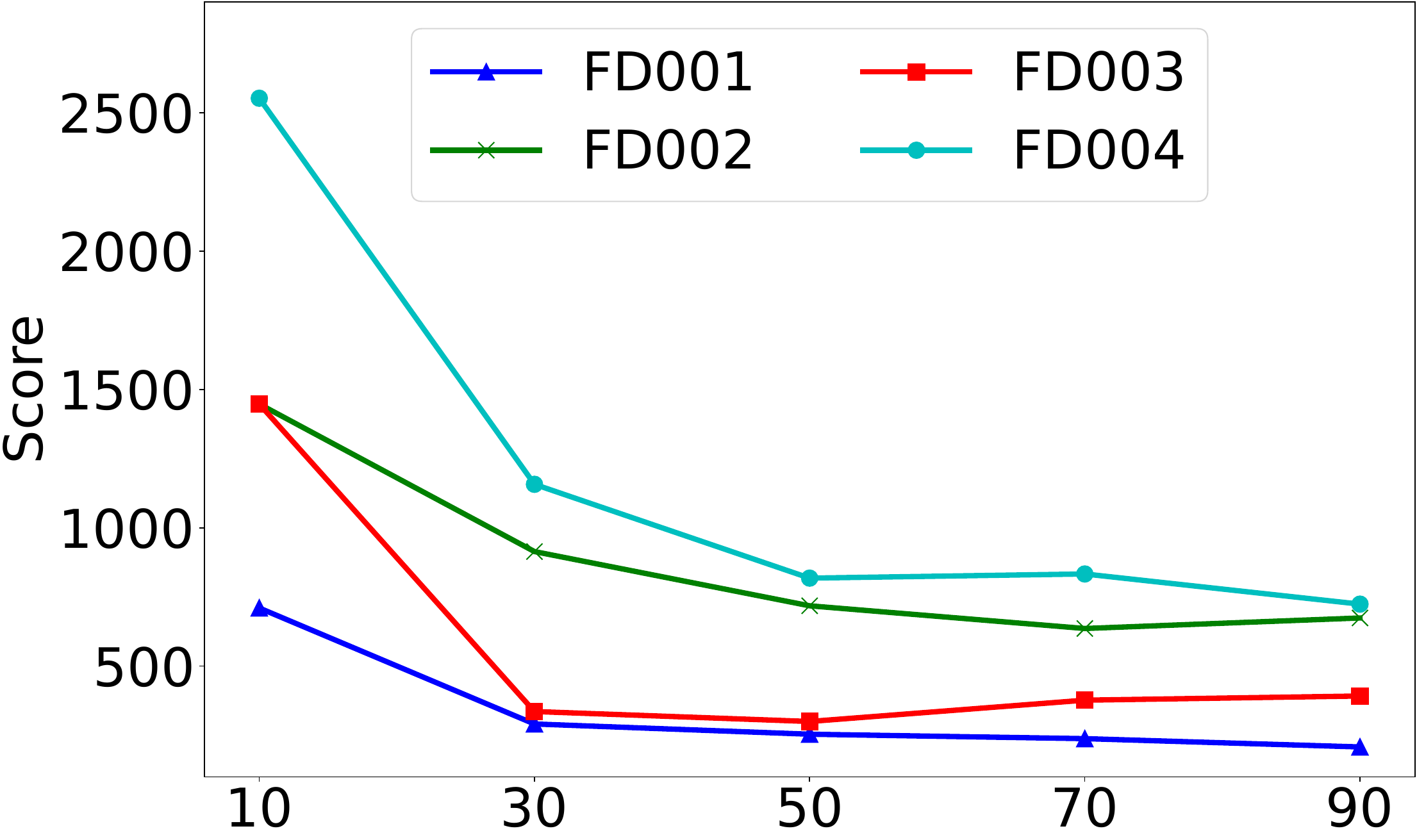}}
   % \vspace{-2mm}
   % \vspace{-3mm}
	\caption{Impact of Time window size $L$ on C-MAPSS.}
	% \vspace{-2mm}
	\label{fig:time window}
\end{figure}

\subsection{Model Complexity Analysis}
% \wen{need rewrite}
The complexity of a model is a key factor in real-world deployments. Furthermore, when comparing different models, it is also important to understand if the increased performance is simply a result of increased model complexity instead of a better model design. 
The overall time complexity of \model\ is derived by analyzing its key operations, including graph construction, message passing, temporal modeling (via GRU), FiLM layer transformations, and temporal aggregation. The dominant terms result in: $\mathcal{O}(T \cdot N^2 \cdot d + T \cdot N \cdot d^2)$. Here, $T$ represents the number of time steps, $N$ the number of sensors, and $d$ the embedding dimension. Similarly, for the overall space complexity,  the storage requirements for adjacency matrices, node embeddings, and hidden states result in an overall space complexity of: $\mathcal{O}(T \cdot N^2 + T \cdot N \cdot d)$. 
What's more, we examine the complexity of our approach \model\ in comparison to several competitive baselines on the C-MAPSS dataset. 
We utilize standard metrics including Floating-point Operations Per Second (FLOPs), the total number of learnable parameters, and  inference time per testing sample to gauge model complexities.

The detailed comparison is presented in Table~\ref{tab:complexity}. 
In terms of FLOPs, \model\ is slightly more expensive than the most lightweight models (e.g., BLSTM and LOGO). This increased demand arises from our integration of RNN and GNN architectures. Despite this, the difference in FLOPs can be deemed marginal given the superior performance of \model, especially on the more realistic N-CMAPSS dataset as shown in Table~\ref{table:performance n-cmapss}.
Despite of its superior performance, our \model\ entails the least number of learnable parameters, and thus carries a low risk of overfitting. The parameter efficiency is due to our design that includes the bottleneck MLP structure and the FiLM layer. 
And \model\ achieves the second-fastest inference speed among all methods, highlighting its efficiency despite incorporating fine-grained temporal modeling and heterogeneous graph structures.

\begin{table}[ht]
\centering
\addtolength{\tabcolsep}{2pt}
\caption{Comparison of model complexity between \model\ and the baselines (bold: best; underline: runner-up).}
\begin{tabular}{lrrr}
\toprule
\textbf{Models} & \textbf{FLOPs} & \textbf{\# Parameters} & \textbf{Inference/ms}  \\
\midrule
Hybrid     & 1,087,500   & 32,207  &129\\
BLSTM      & \textbf{880,338}     & 36,613  &  101\\
TCMN       & 3,982,694   & 193,271  & 126\\
KDnet      & 2,423,360   & 114,753 &  \textbf{68}\\
SRCB       & 5,177,824   & 492,335  & 139\\
2D-CNN     & 1,863,296   & 56,001  &  113\\
3D-CNN     & 154,160,562 & 106,309 &  458\\
Transformer& 2,556,400   & 33,207  &  142\\
Informer   & 4,861,920   & 187,073  & 130\\
Informer+RevIN &4,894,020 & 187,715 & 133\\
Autoformer & 1,442,586   & 39,524  &  141\\
Crossformer& 34,746,894  & 1,194,907 & 128\\
\midrule
DAG        & 1,131,802   & 69,448  &  135\\
STGCN      & 1,121,822   & 47,981  &  171\\
HAGCN      & 9,759,976   & 30,340  & 213\\
MAGCN      & 2,213,406   & 32,464  &  131\\ 
LOGO       & \underline{1,078,876}   & 32,009 & 123\\
TKGIN &11,836,416&446,464&374 \\
THAN &3,724,161& \underline{20,644} & 130\\

\midrule
\model     & 1,496,513   & \textbf{10,458}  &  \underline{79}\\
\bottomrule
\end{tabular}
\label{tab:complexity}
\end{table}

\subsection{Analysis of Capacity to Respond to Heterogeneity}

\begin{figure}[t]
\centering
\includegraphics[width=0.5\linewidth]{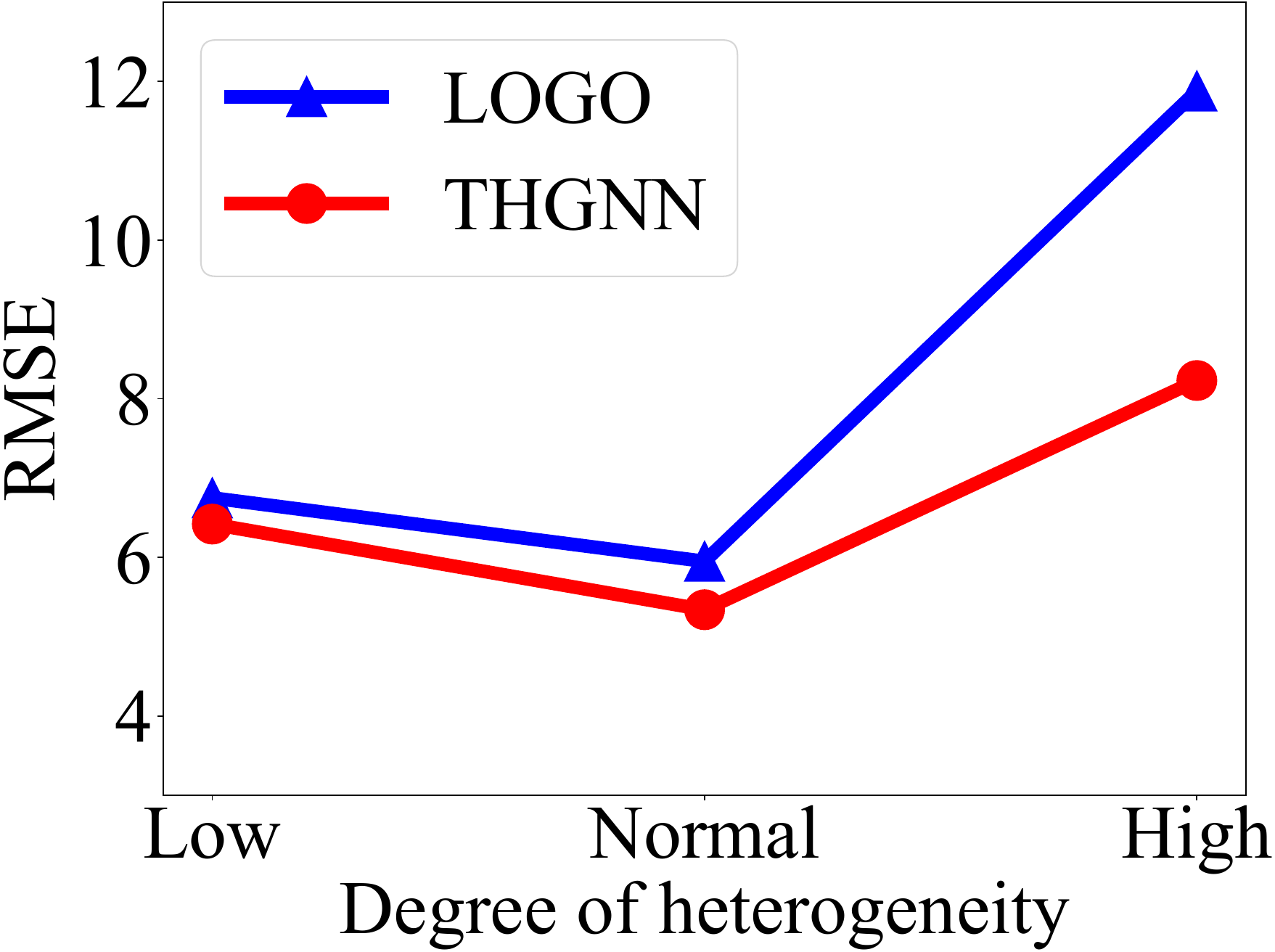}
\caption{Analysis of capacity to respond to heterogeneity}
\label{fig:hetero study}
\end{figure}

To assess the robustness of THGNN under varying levels of heterogeneity, we selected three groups of test data from the N-CMAPSS test set, characterized by low, medium, and high levels of heterogeneity.
We adjusted the connections between sensors to simulate different levels of heterogeneity: eliminating edges between sensors with different node types results in low heterogeneity, keeping the original edge structure represents normal heterogeneity, and removing edges between sensors with the same node type creates high heterogeneity.
As shown in Figure~\ref{fig:hetero study}, THGNN demonstrates strong performance across all scenarios, compared with LOGO, one competitive baseline.
THGNN performs particularly well in low-heterogeneity conditions due to its ability to exploit stronger correlations between neighboring sensors.
Even in high-heterogeneity conditions, the flexibility of the FiLM-based adaptation layer allows THGNN to maintain competitive performance, with only a gradual increase in RMSE.
LOGO, by contrast, lacks components to account for heterogeneity, resulting in a significant degradation in performance as heterogeneity increases.
We also observed an interesting phenomenon: under the low-heterogeneity condition, both models perform slightly worse compared to the normal-heterogeneity condition. This is likely because eliminating edges between sensors of different node types reduces the overall number of edges, which in turn limits the flow of information.

\section{Conclusion}
% \wen{need rewrite more}
In this work, we introduced a novel Temporal and Heterogeneous Graph Neural Network (\model) for the prediction of RUL in system health monitoring. 
\model\ has the following strengths:
(1) The fine-grained temporal modeling captures subtle temporal dynamics, leading to significant performance improvements;
(2) The heterogeneous graph construction, combined with the FiLM-based adaptation layer, allows our model to handle diverse sensor types effectively;
(3) Despite its advanced modeling capabilities, the method requires less parameters and achieves competitive inference speed, making it practical for real-time applications.
We conducted extensive experiments on both the C-MAPSS and N-CMAPSS datasets to evaluate the effectiveness of our method. 
Most notably, we achieved significant improvements over the best prior method on the N-CMAPSS dataset, ranging 5.7\%--19.2\% in RMSE and 19.8\%--31.6\% in Score.
% The proposed model aggregates historical neighbors to capture the temporal dynamics and spatial correlations between sensors in a fine-grained manner. In particular, our aggregation of historical neighbors is inspired by the Hawkes process,  and we exploit the heterogeneity of diverse sensors using a FiLM layer. 
% Finally, we conducted extensive experiments on both the C-MAPSS and N-CMAPSS datasets to evaluate the effectiveness of our method. 
% Most notably, we achieved significant improvements over the best prior method on the N-CMAPSS dataset, ranging 5.7\%--19.2\% in RMSE and 19.8\%--31.6\% in Score.
For future directions, the fully connected graphs used in this study could be optimized to improve scalability for larger-scale applications. One promising avenue is the use of threshold-based edge sparsification, which could enhance both efficiency and alignment with conventional GNN methodologies.
In the specific context of machine RUL prediction, the sensor set is typically fixed, with new sensors rarely introduced under normal operating conditions. Thus, inductive scenarios are generally less relevant for this task. Nevertheless, in cases such as sensor replacements or system upgrades, the inductive capability of THGNN presents a valuable opportunity for further exploration.

% As for the future directions, first, the graphs we construct are fully connected graphs, so optimizing graph topology could be valuable for larger-scale applications. Possible future direction can be using threshold-based edge sparsification, to improve scalability and alignment with traditional GNN practices.
% Second, in the specific context of machine RUL prediction, the set of sensors in a machine is typically fixed, and new sensors are rarely introduced under normal operating conditions. As a result, inductive scenarios are less relevant for this task. 
% However, in certain cases, such as sensor replacement or system upgrades, the inductive capability of THGNN could prove valuable, worth of exploring.
% \wen{noted edge sparsification as a potential enhancement for future iterations of our model to improve scalability and alignment with traditional GNN practices.}
% We conducted extensive experiments on both C-MAPSS and N-CMAPSS datasets to validate the effectiveness of our method. Notably, we achieved significant improvements on the N-CMAPSS datasets, which are more complex and closely resembles real-world conditions. Specifically, our approach resulted in a 10.6\% enhancement in Root Mean Square Error (RMSE) and a 20.4\% increase in Score.

\section*{Acknowledgments}
This research is supported by the Agency for Science, Technology and Research (A*STAR) under its AME Programmatic Funds (Grant No. A20H6b0151), 
% This research / project is supported by the Ministry of Education, Singapore, under its Academic Research Fund Tier 2 (Proposal ID: T2EP20122-0041). 
and the Ministry of Education, Singapore, under its Academic Research Fund Tier 2 (Proposal ID:
T2EP20122-0041). 
Any opinions, findings and conclusions or recommendations expressed in this material are those of the author(s) and do not reflect the views of A*STAR or the Ministry of Education, Singapore.

%{\appendices
%\section*{Proof of the First Zonklar Equation}
%Appendix one text goes here.
% You can choose not to have a title for an appendix if you want by leaving the argument blank
%\section*{Proof of the Second Zonklar Equation}
%Appendix two text goes here.}

% \clearpage

\bibliographystyle{IEEEtran}
\bibliography{references.bib}

% \newpage

% \section{Biography Section}
% If you have an EPS/PDF photo (graphicx package needed), extra braces are
%  needed around the contents of the optional argument to biography to prevent
%  the LaTeX parser from getting confused when it sees the complicated
%  $\backslash${\tt{includegraphics}} command within an optional argument. (You can create
%  your own custom macro containing the $\backslash${\tt{includegraphics}} command to make things
%  simpler here.)
 
% \vspace{11pt}

% \bf{If you include a photo:}\vspace{-33pt}
% \begin{IEEEbiography}
% [{\includegraphics[width=1in,height=1.25in,clip,keepaspectratio]{fig1}}]{Michael Shell}
% Use $\backslash${\tt{begin\{IEEEbiography\}}} and then for the 1st argument use $\backslash${\tt{includegraphics}} to declare and link the author photo.
% Use the author name as the 3rd argument followed by the biography text.
% \end{IEEEbiography}

% \vspace{11pt}

% \bf{If you will not include a photo:}\vspace{-33pt}
% \vspace{-10mm}

\begin{IEEEbiographynophoto}{Zhihao Wen}
received his Ph.D. degree in Computer Science from Singapore Management University in 2023. He is now a senior algorithm engineer in Ant Group. His research interests include graph-based machine learning and data mining, as well as their applications for the Web and social media.
\end{IEEEbiographynophoto}

\vspace{-10mm}
\begin{IEEEbiographynophoto}{Yuan Fang}
received his Ph.D. degree in Computer Science from University of Illinois at Urbana Champaign in 2014. He is currently an Associate Professor in the School of Computing and Information Systems, Singapore Management University. His research focuses on graph-based machine learning and data mining, as well as their applications for the Web and social media.
\end{IEEEbiographynophoto}

\vspace{-10mm}
\begin{IEEEbiographynophoto}{Pengcheng Wei}
received his Master degree in Computer Science from Hunan University in 2022. He is now a Ph.D. student in the Pillar of Information Systems Technology and Design, Singapore University of Technology and Design. His research interets include data mining, graph-based machine learning and system security.
\end{IEEEbiographynophoto}

\vspace{-10mm}
\begin{IEEEbiographynophoto}{Fayao Liu}
received the BEng and MEng degrees from the School of Computer
Science, National University of Defense Technology, Hunan, China, in 2008 and 2010, respectively, and the PhD degree from the University of Adelaide, Australia, in 2015. Her current research interests include machine learning and computer vision.
\end{IEEEbiographynophoto}

\vspace{-10mm}
\begin{IEEEbiographynophoto}{Zhenghua Chen}
received the B.Eng. degree in mechatronics engineering from University of Electronic Science and Technology of China (UESTC), Chengdu, China, in 2011, and Ph.D. degree in electrical and electronic engineering from Nanyang Technological University (NTU), Singapore, in 2017. He has been working at NTU as a research fellow. Currently, he is a scientist at Institute for Infocomm Research, Agency for Science, Technology and Research (A*STAR), Singapore. He has won several competitive awards, such as First Place Winner for CVPR 2021 UG2+ Challenge, A*STAR Career
Development Award, First Runner-Up Award for Grand Challenge at IEEE VCIP 2020, Finalist Academic Paper Award at IEEE ICPHM 2020, etc. He serves as Associate Editor for Elsevier Neurocomputing and Guest Editor for IEEE Transactions on Emerging Topics in Computational Intelligence. He is currently the Vice Chair of IEEE Sensors Council Singapore Chapter and IEEE Senior Member. His research interests include smart sensing, data analytics, machine learning, transfer learning and related applications.
\end{IEEEbiographynophoto}

\vspace{-10mm}
\begin{IEEEbiographynophoto}{Min Wu}
is a principal scientist in Machine Intellection Department, Institute for Infocomm Research. He received Ph.D. degree from Nanyang Technological University, Singapore, in 2011, and received B.S. degree in Computer Science from University of Science and Technology of China,
2006. His research interests include Graph Mining from Large-Scale Networks, Learning from Heterogeneous Data Sources, Ensemble Learning, and Bioinformatics.
\end{IEEEbiographynophoto}

% Use $\backslash${\tt{begin\{IEEEbiographynophoto\}}} and the author name as the argument followed by the biography text.

\vfill

\end{document}